\newcommand{\BatchSMUL}{\texttt{BatchSMUL}}
\newcommand{\BatchSquare}{\texttt{BatchSquare}}
\newcommand{\SMIN}{\texttt{SMIN}}
\definecolor{cpcolor}{RGB}{245, 245, 245} 
\definecolor{cspcolor}{RGB}{215, 215, 215} 
\definecolor{whitecolor}{RGB}{253, 253, 253} 
\newtheorem{corollary}{\textbf{Corollary}}
\newtheorem{theorem}{\textbf{Theorem}}
\newtheorem{proof}{\textbf{Proof}}
\newcolumntype{Y}{>{\centering\arraybackslash}X}
\newcommand{\Rmnum}[1]{\expandafter\@slowromancap\romannumeral #1@}
\def\BibTeX{{\rm B\kern-.05em{\sc i\kern-.025em b}\kern-.08em
    T\kern-.1667em\lower.7ex\hbox{E}\kern-.125emX}}
\begin{document}
\title{Pura: An Efficient Privacy-Preserving Solution for Face Recognition}

\author{
Guotao Xu,
Bowen Zhao*,~\IEEEmembership{Member,~IEEE},
Yang Xiao*,~\IEEEmembership{Member,~IEEE},
Yantao Zhong,
Liang Zhai,
Qingqi Pei,~\IEEEmembership{Senior Member,~IEEE}
\thanks{}
\IEEEcompsocitemizethanks{
\IEEEcompsocthanksitem G. Xu and B. Zhao are with the Guangzhou Institute of Technology, and Shaanxi Key Laboratory of Blockchain and Secure Computing, Xidian University, Guangzhou, China. E-mail: guotaoxu@stu.xidian.edu.cn, bwinzhao@gmail.com
\IEEEcompsocthanksitem Yang Xiao is with the School of Cyber Engineering, and the Engineering Research Center of Trusted Digital Economy, Universities of Shaanxi Province, Xidian University, Xi’an, China. E-mail: yxiao@xidian.edu.cn
\IEEEcompsocthanksitem Yantao Zhong, China Resources Intelligent Computing Technology Co., Ltd., Guangzhou, China. E-mail: zhongyantao@126.com
\IEEEcompsocthanksitem Liang Zhai, Chinese Academy of Surveying and Mapping, Beijing, China. E-mail: zhailiang@casm.ac.cn
\IEEEcompsocthanksitem Qingqi Pei is with the State Key Laboratory of Integrated Service Networks, Xidian University, Xi’an, China. E-mail: qqpei@mail.xidian.edu.cn

Corresponding author: Bowen Zhao, Yang Xiao}
}

\markboth{Journal of \LaTeX\ Class Files,~Vol.~xx, No.~xx, xx~xx}%
{How to Use the IEEEtran \LaTeX \ Templates}

\maketitle
\IEEEdisplaynontitleabstractindextext

\begin{abstract}

Face recognition is an effective technology for identifying a target person by facial images. However, sensitive facial images raises privacy concerns. Although privacy-preserving face recognition is one of potential solutions, this solution neither fully addresses the privacy concerns nor is efficient enough. To this end, we propose an efficient privacy-preserving solution for face recognition, named Pura, which sufficiently protects facial privacy and supports face recognition over encrypted data efficiently. Specifically, we propose a privacy-preserving and non-interactive architecture for face recognition through the threshold Paillier cryptosystem. Additionally, we carefully design a suite of underlying secure computing protocols to enable efficient operations of face recognition over encrypted data directly. Furthermore, we introduce a parallel computing mechanism to enhance the performance of the proposed secure computing protocols. Privacy analysis demonstrates that Pura fully safeguards personal facial privacy. Experimental evaluations demonstrate that Pura achieves recognition speeds up to 16 times faster than the state-of-the-art.

\end{abstract}

\begin{IEEEkeywords}
Face recognition, secure computing, homomorphic encryption, threshold Paillier cryptosystem, privacy protection.
\end{IEEEkeywords}

\section{Introduction}
\label{sec:introduction}

\IEEEPARstart{F}{ace} recognition is a practical biometric technology with widespread applications across various fields, including attendance access control \cite{manjula2012face}, security \cite{lander2018use}, and finance \cite{pinthong2020face}. Technically, face recognition identifies and verifies individuals based on their unique facial features \cite{haji2016real}. These features are captured by cameras and transformed into feature vectors through a face recognition model \cite{turk1991eigenfaces}. The recognition result is then determined by measuring the similarity (e.g., Euclidean distance, cosine similarity) between a feature vector and stored database entries \cite{perlibakas2004distance}.

Outsourcing face recognition tasks is a common way to enhance flexibility \cite{muslim2017face}, however, it raises significant privacy concerns. To outsource a face recognition task, the client device (e.g., smartphone, camera) captures personal facial images and sends the images or feature vectors to a cloud server, where the cloud server executes face recognition computations and returns the recognition result \cite{pawle2013face, siregar2018human}. Outsourcing face recognition tasks reduces the computational burden on the client device by utilizing the high-performance computing capabilities of the cloud server. Nevertheless, this practice introduces considerable privacy risks. The transmission and storage of sensitive data (i.e., facial images, feature vectors) by the cloud server may lead to data leakage \cite{laishram2024toward}. A notable example is the iCloud celebrity photo leak incident, which results in the unauthorized release of personal images \cite{fallon2015celebgate}. In various privacy-sensitive scenarios, leaky images or feature vectors threaten personal information security \cite{staicu2019leaky}.

Privacy-preserving face recognition (PPFR) is a potential solution that balances the strengths and concerns of outsourcing recognition tasks. A PPFR scheme enables the cloud server to perform face recognition without leaking facial data including facial images and feature vectors. To safeguard privacy, PPFR usually protects facial data through homomorphic encryption \cite{ogburn2013homomorphic}, secret sharing \cite{beimel2011secret}, or differential privacy \cite{xiong2020comprehensive}. In this case, the cloud server is enforced to perform face recognition operations on encrypted or confused data \cite{qin2014towards, guo2019towards}. However, current PPFR methods still face several limitations.

Existing PPFR solutions cannot always fully safeguard personal facial data and require interactions between the client and the cloud server. Current work, such as \cite{erkin2009privacy, sadeghi2009efficient, lei2021privface} requires client-server interactions during the recognition computation process, introducing computational and communication burdens on the client side. Furthermore, current PPFR solutions leak facial data like feature vectors to the cloud server. Most PPFR solutions adopt homomorphic encryption (HE) to support direct computations over encrypted data. Unfortunately, some HE-based PPFR solutions \cite{evans2011efficient, xiang2016privacy, drozdowski2019application} can not fully safeguard facial data because the cloud server obtains the plaintext results of the intermediate calculations, which means the cloud server can infer more information, such as feature vectors.

Moreover, existing PPFR approaches struggle to support effective recognition computations while maintaining recognition accuracy. On the one hand, PPFR solutions \cite{lei2021privface, drozdowski2019application} introduce new approaches for computing the similarity, such as the square of Euclidean distance and cosine similarity. However, they fall short in effectively performing the comparison operation, as they rely on plaintext for comparison. On the other hand, maintaining the same accuracy as schemes without privacy protection remains a difficulty. For instance, the solution \cite{blanton2012secure} employs approximation during computation, which results in a loss of accuracy. Additionally, some existing PPFR solutions \cite{chamikara2020privacy, ou2023faceidp} based on differential privacy enhance privacy protection by introducing noise, but this also compromises the usability of facial data and reduces recognition accuracy.

In general, existing PPFR solutions are limited by low efficiency due to complex computations and the inefficient use of computing resources. Although homomorphic encryption allows for computations over ciphertexts, recognition typically involves processing large amounts of data, which increases the computational overhead. To alleviate this dilemma, solutions \cite{evans2011efficient, drozdowski2019application} propose batch computations over ciphertexts. However, they still suffer from high computation costs, as operations like multiplication and minimum are still expensive and time-consuming. Furthermore, the distribution of server computing power is often suboptimal. In some multi-server PPFR solutions \cite{chun2014outsourceable, huang2023efficient}, one server always needs to wait for the calculation results from another one, which results in a waste of computational resources.

To avoid the above limitations, in this work, we innovatively propose an efficient privacy-preserving solution for face recognition, named Pura\footnote{Pura: an efficient \underline{p}rivacy-preserving sol\underline{u}tion fo\underline{r} f\underline{a}ce recognition}. Roughly speaking, we propose a privacy-preserving face recognition framework that is non-interactive and ensures no privacy leakage. To enable effective computations over ciphertexts without degrading accuracy, we design a suite of secure computing protocols. Additionally, by introducing a parallel computing mechanism, we improve the efficiency of the proposed Pura. The contributions of this paper are summarized in three folds:
\begin{itemize}
    \item We propose a novel framework for non-interactive and privacy-preserving face recognition falling in a twin-server architecture. The proposed framework enables cloud servers to perform privacy-preserving face recognition operations without frequent client-server interactions.

    \item We carefully design a suite of secure computing protocols that support effective recognition operations without sacrificing accuracy. The proposed protocols, including a batch secure square protocol (\BatchSquare) and secure minimum protocols (2-\SMIN~and $n$-\SMIN), enable valuable operations over ciphertexts, and support Pura to effectively perform the privacy-preserving face recognition task.

    \item We introduce an efficient parallel computing mechanism to enhance the computational efficiency of the servers. Specifically, both data storage and encrypted data computation are assigned to two servers simultaneously. Experimental evaluations demonstrate the efficiency of the proposed secure computing protocols and Pura.
\end{itemize}

The rest of this work is organized as follows. Section \ref{sec:related_work} briefly reviews the related work on privacy-preserving face recognition, and Section \ref{sec:preliminaries} then introduces preliminaries. After that, we define the system model and threat model in Section \ref{sec:models}. We carefully design a suite of secure computing protocols in Section \ref{sec:protocols}. In Section \ref{sec:solution}, we elaborate on the design of Pura, including problem formulation and the workflow of Pura. Rigorous privacy analysis is given in Section \ref{sec:analysis}. In Section \ref{sec:experiments}, the experimental results and a comprehensive analysis are shown. Finally, we summarize this work in Section \ref{sec:conclusion}.

\section{Related Work}
\label{sec:related_work}

In this section, we briefly review the related work on privacy-preserving face recognition.

Privacy-preserving face recognition has been extensively researched and has shown rapid development in recent years. Partially homomorphic encryption (PHE) allows specific computations over ciphertexts. Erkin \textit{et al.} \cite{erkin2009privacy} first integrated PHE into face recognition and proposed a privacy-preserving face recognition system. Subsequently, Sadeghi \textit{et al.} \cite{sadeghi2009efficient} combined PHE with garbled circuit (GC) to enhance the performance of the work \cite{erkin2009privacy}. Huang \textit{et al.} \cite{evans2011efficient} further optimized previous work of PPFR based on PHE and GC. However, the above solutions require client-server interactions during the recognition phase. Chun \textit{et al.} \cite{chun2014outsourceable} introduced a more secure and outsourceable privacy-preserving biometric authentication protocol. Nevertheless, this approach raises privacy concerns as the server gains knowledge of the recognition result, potentially revealing whether the input face matches any person in the database.

Since fully homomorphic encryption (FHE) supports more mathematical operations over encrypted data than PHE, several solutions based on FHE have been proposed lately. Xiang \textit{et al.} \cite{xiang2016privacy} firstly outsourced computation and presented a privacy-preserving face recognition protocol based on FHE. Although \cite{xiang2016privacy} encrypts the input facial data during face recognition, it stores the face database in plaintext on the server. Drozdowski \textit{et al.} \cite{drozdowski2019application} presented a PPFR protocol for a three-party PPFR architecture using FHE. However, the private key is held by one of the servers, thus the server is able to decrypt the ciphertexts during computation. Huang \textit{et al.} \cite{huang2023efficient} proposed the state-of-the-art PPFR protocol based on FHE and GC. Unfortunately, the work \cite{huang2023efficient} utilizes the column-wise strategy when storing the face database, resulting in a time-consuming process for dynamic updates of the database.

To the best of our knowledge, there are privacy-preserving biometric recognition solutions also using secret sharing \cite{blanton2012secure} and differential privacy \cite{chamikara2020privacy} to protect privacy. However, in the work \cite{blanton2012secure}, the server obtains the input access pattern, which can retrieve a record from the database a record that matches the input face data. In the solution \cite{chamikara2020privacy}, recognition accuracy is degraded due to the noise introduced.

To summarize, existing PPFR solutions still suffer from privacy leakage and poor efficiency. Our work, Pura, addresses the aforementioned issues. Pura is non-interactive and efficient. Besides, Pura leaks nothing about facial data and maintains the same precision as schemes without privacy protection. Additionally, Pura supports the update of the face database dynamically and allows for batch computations in parallel.

\section{Preliminaries}
\label{sec:preliminaries}

In this section, we introduce the threshold Paillier cryptosystem \cite{lysyanskaya2001adaptive} and the traditional face recognition approach \cite{barnouti2016face}. To improve clarity, we denote $\llbracket m \rrbracket$ as the ciphertext of a plaintext $m$. In addition, "$\to$" means an output operation. And $\stackrel{\$}{\gets}$ represents a random selection operation.

\subsection{Threshold Paillier Cryptosystem}

In this work, we consider the (2,2)-threshold Paillier cryptosystem \cite{zhao2023soci}, which consists of the following polynomial time algorithms:

    \textbf{N Generation} (\texttt{NGen}): \texttt{NGen} takes a security parameter $\kappa$ as input and outputs ($N,P,Q,p,q$). Formally, \texttt{NGen} is comprised of the following steps:

    \begin{enumerate}
        \item \label{NGen1} Randomly generate $\frac{l(\kappa)}{2}$-bit odd primes $p$ and $q$, where $l(\kappa)$ represents the bit length of the private key of the Paillier cryptosystem and $l(\kappa) = 4\kappa$;
    
        \item Randomly generate ($\frac{n(\kappa)-l(\kappa)}{2}-1$)-bit odd integers $p'$ and $q'$, where $n(\kappa)$ refers to the bit length of the modulus $N$ for Paillier;
    
        \item Calculate $P=2pp'+1$ and $Q=2qq'+1$;
    
        \item If $p,q,p',q'$ are not co-prime, or $P$ or $Q$ is not a prime, then re-generate ($p, q, P, Q$) from step \ref{NGen1});
    
        \item Calculate $N=PQ$, then output ($N,P,Q,p,q$).
    \end{enumerate}

    \textbf{Key Generation} (\texttt{KeyGen}): \texttt{KeyGen} takes a security parameter $\kappa$ as input and generates a public key $pk$ and a private key $sk$. \texttt{KeyGen} calls \texttt{NGen} to get  ($N,P,Q,p,q$) first. Then, \texttt{KeyGen} sequentially calculates $\alpha=pq$, $\beta=\frac{(P-1)(Q-1)}{4pq}$, and $h=-y^{2\beta} \!\!\mod N$, where $y \stackrel{\$}{\gets} \mathbb{Z}_N^*$. The public key is denoted by $pk=(h,N)$ and the private key is denoted by $sk=\alpha$. Particularly, the private key $sk$ is split into two threshold keys, denoted by $sk_1$ and $sk_2$, such that $sk_1 + sk_2=0 \!\!\mod 2\alpha$ and $sk_1 + sk_2=1 \!\!\mod N$. 
    $sk_1$ is set as a random number with $\sigma$ bits and $sk_2=((2\alpha)^{-1}\!\!\mod N)\cdot(2\alpha)-sk_1+\eta\cdot2\alpha\cdot N$, where $\eta \in \mathbb{Z}$.

    \textbf{Encryption} (\texttt{Enc}): \texttt{Enc} inputs a plaintext $m \in \mathbb{Z}_N$ and $pk$, and outputs a ciphertext $\llbracket m \rrbracket \in \mathbb{Z}_{N^2}^*$, denoted as 
    \begin{equation}
        \begin{aligned}
            \llbracket m \rrbracket &\gets \texttt{Enc}(pk,m) \\
            &= (1+N)^m\cdot(h^r\!\!\mod N)^N \!\!\mod N^2, 
        \end{aligned}
    \end{equation}
    where $r \stackrel{\$}{\gets} \{0,1\}^{l(\kappa)}$.

    \textbf{Decryption} (\texttt{Dec}): \texttt{Dec} inputs a ciphertext $\llbracket m \rrbracket \in \mathbb{Z}_{N^2}^*$ and $sk$, and outputs a plaintext $m \in \mathbb{Z}_{N}$, formulated as 
    \begin{equation}
        \begin{aligned}
            m &\gets \texttt{Dec}(sk,\llbracket m \rrbracket) \\
            &= (\frac{(\llbracket m \rrbracket^{2\alpha}\!\!\!\mod N^2) - 1}{N} \!\!\!\mod N) \cdot(2\alpha)^{-1}\!\!\!\mod N.
        \end{aligned}
    \end{equation}

    \textbf{Partial Decryption} (\texttt{PDec}): \texttt{PDec} inputs a ciphertext $\llbracket m \rrbracket \in \mathbb{Z}_{N^2}^*$ and a partially private key $sk_i$ ($i \in \{1, 2\}$), and outputs a partially decrypted ciphertext $\llbracket M_i \rrbracket \in \mathbb{Z}_{N^2}^*$, defined as 
    \begin{equation}
        \llbracket M_i \rrbracket \gets \texttt{PDec}(sk_i,\llbracket m \rrbracket) = \llbracket m \rrbracket^{sk_i}\!\!\!\mod N^2.
    \end{equation}

    \textbf{Threshold Decryption} (\texttt{TDec}): \texttt{TDec} inputs a pair of partially decrypted ciphertexts $\llbracket M_1 \rrbracket, \llbracket M_2 \rrbracket \in \mathbb{Z}_{N^2}^*$, and outputs a plaintext $m \in \mathbb{Z}_N$, described as 
    \begin{equation}
        \begin{aligned}
            m &\gets\texttt{TDec}(\llbracket M_1 \rrbracket, \llbracket M_2 \rrbracket) \\
            &= \frac{(\llbracket M_1 \rrbracket \cdot \llbracket M_2 \rrbracket \!\!\mod N^2) - 1}{N}\!\!\!\mod N.
        \end{aligned}
    \end{equation}

The (2,2)-threshold Paillier cryptosystem supports additive homomorphism and scalar-multiplication homomorphism.
\begin{itemize}
    \item \textbf{Additive homomorphism}:\\
    $\texttt{Dec}(sk,\llbracket m_1 \rrbracket \cdot \llbracket m_2 \rrbracket) = \texttt{Dec}(sk,\llbracket m_1 + m_2 \!\!\mod N \rrbracket)$.

    \item \textbf{Scalar-multiplication homomorphism}:\\
    $\texttt{Dec}(sk,\llbracket m \rrbracket^u) = \texttt{Dec}(sk,\llbracket u \cdot m \!\!\mod N \rrbracket), u \in \mathbb{Z}_N$.
\end{itemize}

\subsection{Face Recognition}
\label{pre:fr}
Face recognition checks whether a probe face matches any person in a face database of multiple persons or not \cite{cherepanova2023deep}. In practice, the facial image is usually transformed into an $n$-dimensional feature vector extracted by a deep learning model, e.g., Facenet \cite{schroff2015facenet}. Face recognition usually includes the registration phase and the recognition phase \cite{kar2006multi, parmar2014face}.

The registration phase involves storing multiple feature vectors, also known as face templates, on the server.
Formally, an $n$-dimensional feature vector database of $\Upsilon$ persons is formulated as a matrix with $\Upsilon$ rows and $n+1$ columns:
$$
    \mathcal{D}_{\Upsilon\times (n+1)} =
    \begin{pmatrix}
        ID_1 & v_{1,1} & v_{1,2} & \cdots & v_{1,n} & \\
        ID_2 & v_{2,1} & v_{2,2} & \cdots & v_{2,n} & \\
        \vdots & \vdots & \vdots & \ddots & \vdots & \\
        ID_\Upsilon & v_{\Upsilon,1} & v_{\Upsilon,2} & \cdots & v_{\Upsilon,n} &
    \end{pmatrix},
$$
where $ID$ means the identification of a person.

The recognition phase consists of calculating the distances between the probe feature vector and the feature vector database, and comparing the minimum distance to a threshold value. A probe feature vector is denoted as $p = (p_1, p_2, \cdots, p_n)$. In this work, we utilize the square of Euclidean distance \cite{gawande2014face} as the distance metric. The square of Euclidean distances between the probe feature vector $p$ and all feature vectors in database $\mathcal{D}$ are defined as:
\begin{equation}
\label{eq:squareEuclidean}
d_i^2 = \sum_{j=1}^{n} (p_j-v_{i,j})^2,1\leq i \leq \Upsilon.
\end{equation}
After that, the validity of the recognition result is determined by searching for the minimum distance $d_{min}$ among $\{d_1^2, d_2^2, \cdots, d_{\Upsilon}^2\}$ and then comparing $d_{min}$ with a threshold. If $d_{min}$ is not greater than the threshold, the recognition is successful, otherwise, the recognition is regarded as failed.

\section{System Model and Threat Model}
\label{sec:models}

In this section, we illustrate our system model and threat model of Pura.

\subsection{System Model}

\begin{figure}[ht]
	\centering
	\includegraphics[scale=0.56]{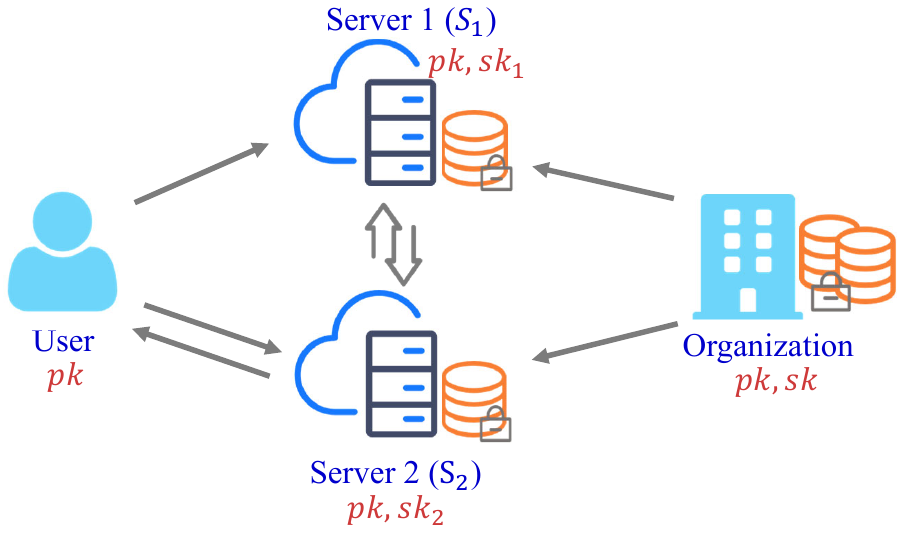}
	\caption{System model.}
	\label{fig:system_model}
\end{figure}

As depicted in Fig. \ref{fig:system_model}, our proposed Pura consists of an organization, multiple users, and two servers $S_1$ and $S_2$.
\begin{itemize}
    \item \textbf{Organization.} The organization owns a database of feature vectors. The organization calls \texttt{KeyGen} to generate a public key $pk$, a private key $sk$, and threshold keys ($sk_1, sk_2$). The organization encrypts the whole feature vector database under $pk$ and horizontally splits the encrypted database into two parts, denoted as $\llbracket \mathcal{D} \rrbracket_1$ and $\llbracket \mathcal{D} \rrbracket_2$. The organization distributes $pk$, ($pk, sk_1, \llbracket \mathcal{D} \rrbracket_1$), and ($pk, sk_2, \llbracket \mathcal{D} \rrbracket_2$) to the users, $S_1$, and $S_2$, respectively.

    \item \textbf{User}. The user is capable of capturing personal facial images. The user extracts a facial image into a feature vector $p$. To preserve privacy, $p$ is encrypted under $pk$ and is sent to both $S_1$ and $S_2$. Finally, the user obtains information from $S_2$ and recovers the recognition result.

    \item \textbf{Twin servers.} $S_1$ and $S_2$ both store a portion of the encrypted feature vector database. They are responsible for a part of the recognition computations, respectively. Meanwhile, they also provide online computation services for each other. When a user requests a recognition service, $S_1$ and $S_2$ jointly and parallelly perform face recognition operations over ciphertexts directly.
\end{itemize}

\subsection{Threat Model}

In this work, there is one type of adversary including the twin servers $S_1$ and $S_2$. Following previous work \cite{erkin2012generating, elmehdwi2014secure, mohassel2017secureml, zhao2022soci, zhao2023soci}, we suppose that $S_1$ and $S_2$ are semi-honest (or say honest-but-curious) and non-colluding. This means $S_1$ and $S_2$ follow the protocols strictly but they attempt to infer a user's private information, such as the face image and the recognition result. Besides, we consider the organization to be fully trusted and does not collude with other entities. We assume that the user is honest.

We consider the twin servers $S_1$ and $S_2$ to be provided by different or even competitive cloud service providers. Due to the commercial competition between service providers, $S_1$ and $S_2$ have no incentive to collude, as doing so would risk their own interests. The collusion between $S_1$ and $S_2$ implies that one server can access the private data of the other server, such as the partially private key, which will leave evidence for the competitor and even cause business losses. Therefore, collusion between the twin servers provided by different cloud service providers is unrealistic.


\section{Secure Computing Protocols}
\label{sec:protocols}

To support face recognition operations that do not disclose privacy, we carefully design a suite of secure computing protocols including batch secure multiplication (\BatchSMUL), batch secure square (\BatchSquare), secure 2-minimum (2-\SMIN), and secure $n$-minimum ($n$-\SMIN).

\subsection{\BatchSquare}
\label{subsec:mul}
FREED \cite{zhao2022freed} has constructed a \BatchSMUL~protocol based on the threshold Paillier cryptosystem. However, the work \cite{zhao2022freed} only supports batch secure multiplication over non-negative numbers. To extend batch secure multiplication to support negative numbers, we proposed a novel batch secure batch multiplication (\BatchSMUL) protocol.

We introduce a shared constant $\delta$ to convert the negative numbers into non-negative numbers. Note that, $\delta$ is set as 0 if all the plaintext values are non-negative, or else $\delta$ is set as the absolute value of the smallest number. \BatchSMUL~can perform secure multiplication for up to $\lfloor |N|/2|L| \rfloor$ pairs of ciphertexts at the same time \cite{zhao2022freed}, where $L$ is also a shared constant and $L \geq 2^{\sigma + 2}$, and $\sigma$ is a secure parameter and $2^\sigma \gg 2^\ell$, where $\ell$ is a constant that controls the domain size of plaintext. \BatchSMUL~takes two groups of ciphertexts $\llbracket x_1 \rrbracket, \cdots, \llbracket x_s \rrbracket$ and $\llbracket y_1 \rrbracket, \cdots, \llbracket y_s \rrbracket$ as inputs, where $s \leq \lfloor |N|/2|L| \rfloor$, $x_i, y_i \in (-2^\ell, 2^\ell)$, and $i \in [1,s]$, and outputs a result group by multiplying the input ciphertext groups in pairs, i.e., $\llbracket x_1 y_1 \rrbracket, \cdots, \llbracket x_s y_s\rrbracket$. As shown in Algorithm \ref{alg:BatchSMUL}, \BatchSMUL~consists of the following three steps:
\begin{enumerate}
    \item $S_1$ chooses two groups of random numbers $r_{i,1}, r_{i,2} \stackrel{\$}{\gets} \{0,1\}^\sigma$. Next, $S_1$ converts $x_i$ and $y_i$ into non-negative numbers and blinds them via the additive homomorphism as $\llbracket X_i \rrbracket \gets \llbracket x_i \rrbracket \cdot \llbracket \delta \rrbracket \cdot \llbracket r_{i,1} \rrbracket$ and $\llbracket Y_i \rrbracket \gets \llbracket y_i \rrbracket \cdot \llbracket \delta \rrbracket \cdot \llbracket r_{i,2} \rrbracket$. Then $S_1$ packs $\llbracket X_i \rrbracket$ and $\llbracket Y_i \rrbracket$ into one ciphertext by calculating 
    \begin{equation}
        \llbracket c_i \rrbracket = \llbracket X_i \rrbracket^{L^{2i-1}} \cdot \llbracket Y_i \rrbracket^{L^{2i-2}}, 
    \end{equation}
    and packs a group of ciphertexts $\llbracket c_i \rrbracket$ into one ciphertext by computing $\llbracket C \rrbracket \gets \prod_{i=1}^{i=s} \llbracket c_i \rrbracket$. Finally, $S_1$ partially decrypts $\llbracket C \rrbracket$ to get $\llbracket C_1 \rrbracket$ and sends $\langle \llbracket C \rrbracket, \llbracket C_1 \rrbracket \rangle$ to $S_2$.

    \item $S_2$ first partially decrypts $\llbracket C \rrbracket$ to get $\llbracket C_2 \rrbracket$ and then obtains 
    \begin{equation}
        \begin{aligned}
            C &= L^{2s-1}(x_s + r_{s, 1} + \delta) + L^{2s-2}(y_s + r_{s, 2} + \delta) \\
            &+ \dots + L(x_1 + r_{1, 1} + \delta) + (y_1 + r_{1, 2} + \delta)
        \end{aligned}
    \end{equation}
    via the threshold decryption. Next, $S_2$ unpacks $C$ to obtain $x_i + r_{i,1}$ and $y_i + r_{i,2}$ by computing
    \begin{equation}
        \begin{cases}
            c_i = \lfloor C \!\!\!\mod L^{2i} / L^{2(i-1)} \rfloor, \\
            x_i + r_{i,1} = \lfloor c_i / L \rfloor - \delta, \\
            y_i + r_{i,2} = c_i \!\!\!\mod L - \delta,
        \end{cases}
    \end{equation}
    Finally, $S_2$ encrypts $(x_i + r_{i,1}) \cdot (y_i + r_{i,2})$ and returns a group of ciphertexts $\{\llbracket (x_i + r_{i,1}) \cdot (y_i + r_{i,2}) \rrbracket\}_{i=1}^{i=s}$ to $S_1$.

    \item Since $S_1$ holds $r_{i,1}$, $r_{i,2}$, $\llbracket x_i \rrbracket$, and $\llbracket y_i \rrbracket$, it can locally compute $\llbracket -r_{i,2} (x_i + \delta) \rrbracket$, $\llbracket -r_{i,1} (y_i + \delta) \rrbracket$, $\llbracket -r_{i,1} \cdot r_{i,2} \rrbracket$, $\llbracket \delta (r_{i,1} + r_{i,2}) \rrbracket$, where $i \in [1, s]$. Then, according to the additive homomorphism, $S_1$ computes
    $\llbracket (x_i + r_{i,1}) \cdot (y_i + r_{i,2}) \rrbracket \cdot \llbracket -r_{i,2} (x_i + \delta) \rrbracket \cdot \llbracket -r_{i,1} (y_i + \delta) \rrbracket \cdot \llbracket -r_{i,1} \cdot r_{i,2} \rrbracket \cdot \llbracket \delta (r_{i,1} + r_{i,2}) \rrbracket$ to obtain $\llbracket x_i y_i \rrbracket$ for $i \in \{1, 2, \dots, s\}$.
\end{enumerate}

\begin{algorithm}[htbp]
	\caption{
        \begin{flushleft}
            \BatchSMUL($\langle\llbracket x_1 \rrbracket, \cdots, \llbracket x_s \rrbracket\rangle$, $\langle\llbracket y_1 \rrbracket, \cdots, \llbracket y_s \rrbracket\rangle$)
        \end{flushleft}
        \vspace{-5pt}
        \rightline{$\to \langle\llbracket x_1 y_1 \rrbracket, \cdots, \llbracket x_s y_s\rrbracket\rangle$ \hspace{-20pt}}
    }
        \label{alg:BatchSMUL}
        \vspace{3pt}
	\KwIn{$S_1$ holds $\langle\llbracket x_1 \rrbracket, \cdots, \llbracket x_s \rrbracket\rangle$ and $\langle\llbracket y_1 \rrbracket, \cdots, \llbracket y_s \rrbracket\rangle$.}
        \KwOut{$S_1$ obtains $\langle\llbracket x_1 y_1 \rrbracket, \cdots, \llbracket x_s y_s\rrbracket\rangle$.}


        \begin{mdframed}[backgroundcolor=cpcolor,innerleftmargin=2pt,innerrightmargin=2pt,innerbottommargin=2pt,leftmargin=-8pt,rightmargin=12pt]
            $\triangleright$ Step 1: $S_1$ computes
        \begin{mdframed}[backgroundcolor=whitecolor,innertopmargin=4pt,innerbottommargin=4pt,leftmargin=0pt,rightmargin=1pt,innerleftmargin=-4pt]
                \quad $\bullet$ $\llbracket X_i \rrbracket \gets \llbracket x_i \rrbracket \cdot \llbracket \delta \rrbracket \cdot \llbracket r_{i,1} \rrbracket$ and $\llbracket Y_i \rrbracket \gets \llbracket y_i \rrbracket \cdot \llbracket \delta \rrbracket \cdot \llbracket r_{i,2} \rrbracket$, 
                
                \quad\quad where $i \in [1, s]$;
    
                \quad $\bullet$ $\llbracket c_i \rrbracket = \llbracket X_i \rrbracket^{L^{2i-1}} \cdot \llbracket Y_i \rrbracket^{L^{2i-2}}$;
    
                \quad $\bullet$ $\llbracket C \rrbracket \gets \prod_{i=1}^{i=s} \llbracket c_i \rrbracket$ and $\llbracket C_1 \rrbracket \gets$ \texttt{PDec}($\llbracket C \rrbracket$);
    
                \quad $\bullet$ and sends $\langle \llbracket C \rrbracket, \llbracket C_1 \rrbracket \rangle$ to $S_2$.
        \end{mdframed}
        \end{mdframed}

        \vspace{-5pt}

        \begin{mdframed}[backgroundcolor=cspcolor,innerleftmargin=2pt,innerrightmargin=2pt,innerbottommargin=2pt,leftmargin=-8pt,rightmargin=12pt]
             $\triangleright$ Step 2: $S_2$ computes
        \begin{mdframed}[backgroundcolor=whitecolor,innertopmargin=4pt,innerbottommargin=4pt,leftmargin=0pt,rightmargin=1pt,innerleftmargin=-4pt]
                \quad $\bullet$ $\llbracket C_2 \rrbracket \gets$ \texttt{PDec}($\llbracket C \rrbracket$) and $C \gets$ \texttt{TDec}($\llbracket C_1 \rrbracket, \llbracket C_2 \rrbracket$);

                \quad $\bullet$ $c_i = \lfloor C \!\!\mod L^{2i} / L^{2(i-1)} \rfloor$, where $i \in [1,s]$;

                \quad $\bullet$ $x_i + r_{i,1} = \lfloor c_i / L \rfloor - \delta$ and $y_i + r_{i,2} = c_i \!\!\mod L - \delta$,
                
                \quad \quad for $i \in [1, s]$;
    
                \quad $\bullet$ $\llbracket (x_i + r_{i,1}) \cdot (y_i + r_{i,2}) \rrbracket \gets$ \texttt{Enc}($(x_i + r_{i,1}) \cdot (y_i + r_{i,2})$),
                
                \quad \quad for $i \in [1, s]$;
    
                \quad $\bullet$ and sends $\{\llbracket (x_i + r_{i,1}) \cdot (y_i + r_{i,2}) \rrbracket\}_{i=1}^{i=s}$ to $S_1$.
        \end{mdframed}
        \end{mdframed}

        \vspace{-5pt}

        \begin{mdframed}[backgroundcolor=cpcolor,innerleftmargin=2pt,innerrightmargin=2pt,innerbottommargin=2pt,leftmargin=-8pt,rightmargin=12pt]
            $\triangleright$ Step 3: For $i \in [1, s]$, $S_1$ computes
        \begin{mdframed}[backgroundcolor=whitecolor,innertopmargin=4pt,innerbottommargin=4pt,leftmargin=0pt,rightmargin=1pt,innerleftmargin=-4pt]
    
                \quad $\bullet$ $\llbracket -r_{i,2} (x_i + \delta) \rrbracket \gets \llbracket x_i + \delta \rrbracket^{-r_{i,2}}$, \\
                \hangindent=18pt $\llbracket -r_{i,1} (y_i + \delta) \rrbracket \gets \llbracket y_i + \delta \rrbracket^{-r_{i,1}}$, \\
                $\llbracket -r_{i,1} \cdot r_{i,2} \rrbracket \gets$ \texttt{Enc}($-r_{i,1} \cdot r_{i,2}$), \\
                $\llbracket \delta (r_{i,1} + r_{i,2}) \rrbracket \gets$ \texttt{Enc}($\delta (r_{i,1} + r_{i,2})$);
                
                \quad $\bullet$ $\llbracket x_i y_i \rrbracket \gets \llbracket (x_i + r_{i,1}) \cdot (y_i + r_{i,2}) \rrbracket \cdot \llbracket -r_{i,2} (x_i + \delta) \rrbracket \cdot$\\
                \hangindent=18pt $\llbracket -r_{i,1} (y_i + \delta) \rrbracket \cdot \llbracket -r_{i,1} \cdot r_{i,2} \rrbracket \cdot \llbracket \delta (r_{i,1} + r_{i,2}) \rrbracket$.
        \end{mdframed}
        \end{mdframed}


\end{algorithm}

Since it is more necessary to calculate the square of a ciphertext in our proposed design, we extend \BatchSMUL~to \BatchSquare. Compared to \BatchSMUL, \texttt{BatchSquare} is able to perform up to $\lfloor |N|/|L| \rfloor$ secure square calculations at the same time. \BatchSquare~takes a group of ciphertexts $\llbracket x_1 \rrbracket, \llbracket x_2 \rrbracket, \cdots, \llbracket x_s \rrbracket$ as inputs, where $i \in [1,s]$, $s \leq \lfloor |N|/|L| \rfloor$, and $-2^\ell < x_i < 2^\ell$, and outputs the ciphertexts of the square of each input ciphertext, i.e., $\llbracket x_1^2 \rrbracket, \cdots, \llbracket x_s^2\rrbracket$. \BatchSquare~is proposed in Algorithm \ref{alg:BatchSquare}. The main difference between \BatchSMUL~and \BatchSquare~lies in the objects and the method of packing. At the step 1, $S_1$ packs $\llbracket X_i \rrbracket$ by computing 
\begin{equation}
    \llbracket C \rrbracket \gets \prod_{i=1}^{i=s} \llbracket X_i \rrbracket^{L^{i-1}}. 
\end{equation}
At the step 2, $S_2$ unpacks $C$ through $\lfloor C\!\!\mod L^{i} / L^{i-1}  \rfloor$, and then obtains $x_i + r_i$ by subtracting $\delta$, where $i \in [1, s]$. At the step 3, $S_1$ obtains $\llbracket x_i^2 \rrbracket$ by computing $\llbracket x_i^2 \rrbracket \gets \llbracket (x_i + r_{i})^2 \rrbracket \cdot \llbracket -2r_{i} (x_i + \delta) \rrbracket \cdot \llbracket -r_{i}^2 \rrbracket \cdot \llbracket 2\delta r_{i} \rrbracket$ for $i \in \{1, 2, \dots, s\}$.

\begin{algorithm}[htbp]
	\caption{\BatchSquare$(\langle\llbracket x_1 \rrbracket, \cdots, \llbracket x_s \rrbracket\rangle)$\\
        \rightline{$\to \langle\llbracket x_1^2 \rrbracket, \cdots, \llbracket x_s^2 \rrbracket\rangle$ \hspace{-20pt}}}
        \label{alg:BatchSquare}
        \vspace{3pt}
	\KwIn{$S_1$ holds $\langle\llbracket x_1 \rrbracket, \cdots, \llbracket x_s \rrbracket\rangle$.}
        \KwOut{$S_1$ obtains $\langle\llbracket x_1^2 \rrbracket, \cdots, \llbracket x_s^2\rrbracket\rangle$.}

        \vspace{4pt}

        \begin{mdframed}[backgroundcolor=cpcolor,innerleftmargin=2pt,innerrightmargin=2pt,innerbottommargin=2pt,leftmargin=-8pt,rightmargin=12pt]
            $\triangleright$ Step 1: $S_1$ computes
        \begin{mdframed}[backgroundcolor=whitecolor,innertopmargin=4pt,innerbottommargin=4pt,leftmargin=0pt,rightmargin=1pt,innerleftmargin=-4pt]
                \quad $\bullet$ $\llbracket X_i \rrbracket \gets \llbracket x_i \rrbracket \cdot \llbracket \delta \rrbracket \cdot \llbracket r_{i} \rrbracket$, where $i \in [1, s]$;

                \vspace{2pt}
    
                \quad $\bullet$ $\llbracket C \rrbracket \gets \prod_{i=1}^{i=s} \llbracket X_i \rrbracket^{L^{i-1}}$ and $\llbracket C_1 \rrbracket \gets$ \texttt{PDec}($\llbracket C \rrbracket$);
    
                \quad $\bullet$ and sends $\langle \llbracket C \rrbracket, \llbracket C_1 \rrbracket \rangle$ to $S_2$.
        \end{mdframed}
        \end{mdframed}

        \vspace{-5pt}

        \begin{mdframed}[backgroundcolor=cspcolor,innerleftmargin=2pt,innerrightmargin=2pt,innerbottommargin=2pt,leftmargin=-8pt,rightmargin=12pt]
             $\triangleright$ Step 2: $S_2$ computes
        \begin{mdframed}[backgroundcolor=whitecolor,innertopmargin=4pt,innerbottommargin=4pt,leftmargin=0pt,rightmargin=1pt,innerleftmargin=-4pt]
                \quad $\bullet$ $\llbracket C_2 \rrbracket \gets$ \texttt{PDec}($\llbracket C \rrbracket$) and $C \gets$ \texttt{TDec}($\llbracket C_1 \rrbracket, \llbracket C_2 \rrbracket$);

                \quad $\bullet$ $x_i + r_{i} = \lfloor C\!\!\mod L^{i} / L^{i-1}  \rfloor - \delta$, where $i \in [1, s]$;
    
                \quad $\bullet$ $\llbracket (x_i + r_{i})^2 \rrbracket \gets$ \texttt{Enc}($(x_i + r_{i})^2$), where $i \in [1, s]$;
    
                \quad $\bullet$ and sends $\{\llbracket (x_i + r_{i})^2 \rrbracket\}_{i=1}^{i=s}$ to $S_1$.
        \end{mdframed}
        \end{mdframed}

        \vspace{-5pt}

        \begin{mdframed}[backgroundcolor=cpcolor,innerleftmargin=2pt,innerrightmargin=2pt,innerbottommargin=2pt,leftmargin=-8pt,rightmargin=12pt]
            $\triangleright$ Step 3: For $i \in [1, s]$, $S_1$ computes
        \begin{mdframed}[backgroundcolor=whitecolor,innertopmargin=4pt,innerbottommargin=4pt,leftmargin=0pt,rightmargin=1pt,innerleftmargin=-4pt]
    
                \quad $\bullet$ $\llbracket -2r_{i} (x_i + \delta) \rrbracket \gets \llbracket x_i + \delta \rrbracket^{-2r_{i}}$, \\
                \hangindent=18pt $\llbracket -r_{i}^2 \rrbracket \gets$ \texttt{Enc}($-r_{i}^2$), \\
                $\llbracket 2\delta r_{i} \rrbracket \gets$ \texttt{Enc}($2\delta r_{i}$);
                
                \quad $\bullet$ $\llbracket x_i^2 \rrbracket \gets \llbracket (x_i + r_{i})^2 \rrbracket \cdot \llbracket -2r_{i} (x_i + \delta) \rrbracket \cdot \llbracket -r_{i}^2 \rrbracket \cdot \llbracket 2\delta r_{i} \rrbracket$.
        \end{mdframed}
        \end{mdframed}


\end{algorithm}

\subsection{\SMIN}
\label{subsec:com}
Zhao \textit{et al.} \cite{zhao2023soci} have proposed a secure comparison protocol (\texttt{SCMP}). However, it is computationally expensive to extend \texttt{SCMP} to support the secure minimum protocol. Therefore, in this work, we propose a novel secure 2-minimum protocol and extend it to a secure $n$-minimum protocol. We first propose 2-\SMIN~that outputs the ciphertext of the minimum value of two given ciphertexts. Based on 2-\SMIN, we develop $n$-\SMIN~that outputs the ciphertext of the minimum value of $n$ given  ciphertexts.

Given two ciphertexts $\llbracket x \rrbracket$ and $\llbracket y \rrbracket$, where $x,y \in [-2^\ell, 2^\ell]$, 2-\SMIN~outputs the ciphertext of the minimum value between $x$ and $y$, i.e., 2-\SMIN($\llbracket x \rrbracket, \llbracket y \rrbracket$) $\to$ $\llbracket min(x,y) \rrbracket$. 2-\SMIN~is shown in Algorithm \ref{alg:2-SMIN}, which consists of three steps as follows:

\begin{enumerate}
    \item $S_1$ selects a random number $\pi \in \{0,1\}$ and computes
    \begin{equation}
        \llbracket D \rrbracket = 
        \begin{cases}
            (\llbracket x \rrbracket \cdot \llbracket y \rrbracket^{N - 1})^{r_1} \cdot \llbracket r_1 + r_2 \rrbracket, &\pi = 0 \\
            (\llbracket y \rrbracket \cdot \llbracket x \rrbracket^{N - 1})^{r_1} \cdot \llbracket r_2 \rrbracket, &\pi = 1
        \end{cases},
    \end{equation}
    where random numbers $r_1 \stackrel{\$}{\gets} \{0,1\}^\sigma $\textbackslash $\{0\}$ and $r_2$, s.t., $r_2 \leq \frac{N}{2}$ and $r_1 + r_2 > \frac{N}{2}$. $\sigma$ is a secure parameter, e.g., $\sigma=128$. Next, $S_1$ partially decrypts $\llbracket D \rrbracket$ to get $\llbracket D_1 \rrbracket$ and then sends $\langle \llbracket D \rrbracket, \llbracket D_1 \rrbracket, \llbracket x \rrbracket, \llbracket y \rrbracket \rangle$ to $S_2$.

    \item $S_2$ obtains $D$ through partially decryption and threshold decryption with $\llbracket D \rrbracket$ and $\llbracket D_1 \rrbracket$. Based on the value of $D$, $S_2$ assigns $\llbracket d_0 \rrbracket$ with a refreshed $\llbracket x \rrbracket$ or $\llbracket y \rrbracket$. That is, 
    \begin{equation}
        \llbracket d_0 \rrbracket = 
        \begin{cases}
            \llbracket y \rrbracket \cdot \llbracket 0 \rrbracket^r, &D>\frac{N}{2} \\
            \llbracket x \rrbracket \cdot \llbracket 0 \rrbracket^r, &D \leq \frac{N}{2}
        \end{cases},
    \end{equation}
    where $r$ is a random number used to refresh ciphertext, e.g., $r \stackrel{\$}{\gets} \{0,1\}^\ell$. $S_2$ returns $\llbracket d_0 \rrbracket$ to $S_1$.

    \item $S_1$ obtains the ciphertext of the minimum one between two ciphertexts according to the value of $\pi$ generated in Step 1. If $\pi=0$, $S_1$ sets $\llbracket min(x, y) \rrbracket = \llbracket d_0 \rrbracket$, otherwise ($\pi=1$), $S_1$ computes $\llbracket min(x, y) \rrbracket = \llbracket x \rrbracket \cdot \llbracket y \rrbracket \cdot \llbracket d_0 \rrbracket^{N-1}$.
\end{enumerate}

\begin{algorithm}[htbp]
	\caption{2-\SMIN$(\llbracket x \rrbracket, \llbracket y \rrbracket)\to \llbracket min(x, y) \rrbracket$}
        \label{alg:2-SMIN}
        \vspace{3pt}
	\KwIn{$S_1$ holds $\llbracket x \rrbracket$ and $\llbracket y \rrbracket$.}
        \KwOut{$S_1$ obtains $\llbracket min(x, y) \rrbracket$.}
        \vspace{4pt}

        \begin{mdframed}[backgroundcolor=cpcolor,innerleftmargin=2pt,innerrightmargin=2pt,innerbottommargin=2pt,leftmargin=-8pt,rightmargin=12pt]
            $\triangleright$ Step 1: $S_1$ computes
        \begin{mdframed}[backgroundcolor=whitecolor,innertopmargin=4pt,innerbottommargin=4pt,leftmargin=0pt,rightmargin=1pt,innerleftmargin=-4pt]
            \quad $\bullet$ $\llbracket D \rrbracket \gets (\llbracket x \rrbracket \cdot \llbracket y \rrbracket^{N - 1})^{r_1} \cdot \llbracket r_1 + r_2 \rrbracket$ if $\pi = 0$, \\
            \hangindent=7pt or $\llbracket D \rrbracket \gets (\llbracket y \rrbracket \cdot \llbracket x \rrbracket^{N - 1})^{r_1} \cdot \llbracket r_2 \rrbracket$ if $\pi = 1$, \\
            \hangindent=7pt where $r_1 \stackrel{\$}{\gets} \{0,1\}^\sigma $\textbackslash $\{0\}$, $r_2 \leq \frac{N}{2}$ and $r_1 + r_2 > \frac{N}{2}$;

            \quad $\bullet$ $\llbracket D_1 \rrbracket \gets$ \texttt{PDec}($\llbracket D \rrbracket$);

            \quad $\bullet$ and sends $\langle \llbracket D \rrbracket, \llbracket D_1 \rrbracket, \llbracket x \rrbracket, \llbracket y \rrbracket \rangle$ to $S_2$.
        \end{mdframed}
        \end{mdframed}
    
        \vspace{-5pt}

        \begin{mdframed}[backgroundcolor=cspcolor,innerleftmargin=2pt,innerrightmargin=2pt,innerbottommargin=2pt,leftmargin=-8pt,rightmargin=12pt]
             $\triangleright$ Step 2: $S_2$ computes
        \begin{mdframed}[backgroundcolor=whitecolor,innertopmargin=4pt,innerbottommargin=4pt,leftmargin=0pt,rightmargin=1pt,innerleftmargin=-4pt]
            \quad $\bullet$ $\llbracket D_2 \rrbracket \gets$ \texttt{PDec}($\llbracket D \rrbracket$) and $D \gets$ \texttt{TDec}($\llbracket D_1 \rrbracket, \llbracket D_2 \rrbracket$);

            \quad $\bullet$ $\llbracket d_0 \rrbracket \gets \llbracket y \rrbracket \cdot \llbracket 0 \rrbracket^r$ if $D > \frac{N}{2}$,
            
            \vspace{2pt}
            \hspace{4pt} or $\llbracket d_0 \rrbracket \gets \llbracket x \rrbracket \cdot \llbracket 0 \rrbracket^r$ if $D \leq \frac{N}{2}$, \\
            \hangindent=7pt where $r \stackrel{\$}{\gets} \{0, 1\}^\ell$;

            \quad $\bullet$ and sends $\llbracket d_0 \rrbracket$ to $S_1$.
    
       \end{mdframed}
        \end{mdframed}

        \vspace{-5pt}

        \begin{mdframed}[backgroundcolor=cpcolor,innerleftmargin=2pt,innerrightmargin=2pt,innerbottommargin=2pt,leftmargin=-8pt,rightmargin=12pt]
            $\triangleright$ Step 3: $S_1$ computes
        \begin{mdframed}[backgroundcolor=whitecolor,innertopmargin=4pt,innerbottommargin=4pt,leftmargin=0pt,rightmargin=1pt,innerleftmargin=-4pt]
            \quad $\bullet$ $\llbracket min(x, y) \rrbracket = \llbracket d_0 \rrbracket$ if $\pi = 0$, \\
            \hangindent=7pt otherwise $\llbracket min(x, y) \rrbracket = \llbracket x \rrbracket \cdot \llbracket y \rrbracket \cdot \llbracket d_0 \rrbracket^{N-1}$ if $\pi = 1$.
        \end{mdframed}
        \end{mdframed}


\end{algorithm}

As described in Algorithm \ref{alg:n-SMIN}, it is easy to construct $n$-\SMIN~based on the proposed 2-\SMIN. Formally, $n$-\SMIN~takes $n$ ciphertexts $\llbracket x_1 \rrbracket, \llbracket x_2 \rrbracket, \cdots, \llbracket x_n \rrbracket$ as inputs, and outputs $\llbracket min(x_1, x_2, \cdots, x_n) \rrbracket$.

\begin{algorithm}[htbp]
	\caption{$n$-\SMIN($\llbracket x_1 \rrbracket, \llbracket x_2 \rrbracket, \cdots, \llbracket x_n \rrbracket$) \\
                        \rightline{$\to \llbracket min(x_1, x_2, \cdots, x_n) \rrbracket$ \hspace{-20pt}}}
        \label{alg:n-SMIN}
        \vspace{3pt}
	\KwIn{$S_1$ holds $\llbracket x_1 \rrbracket, \llbracket x_2 \rrbracket, \cdots, \llbracket x_n \rrbracket$.}
        \KwOut{$S_1$ obtains $\llbracket min(x_1, x_2, \cdots, x_n) \rrbracket$.}
        \vspace{4pt}

        \begin{mdframed}[backgroundcolor=whitecolor,innerleftmargin=2pt,innerrightmargin=2pt,innerbottommargin=4pt,leftmargin=-8pt,rightmargin=12pt]
        $\llbracket min(x_1, x_2, \cdots, x_n) \rrbracket \gets \llbracket x_1 \rrbracket$;
        
        $t = 2$;
        
        \While{$t\leq n$} {
    
            $\llbracket min(x_1, x_2, \cdots, x_n) \rrbracket \gets$ 

            \rightline{2-\SMIN($\llbracket min(x_1, x_2, \cdots, x_n) \rrbracket,\llbracket x_t \rrbracket$);}
            
            $t = t + 1$;
        
        }

        \textbf{return} $\llbracket min(x_1, x_2, \cdots, x_n) \rrbracket$.
        \end{mdframed}
\end{algorithm}

\section{Pura Design}
\label{sec:solution}

In this section, we first formulate privacy-preserving face recognition. Next, we provide a brief overview of the proposed Pura. Following that, we elaborate on Pura.

\subsection{Problem Formulation}

Roughly speaking, privacy-preserving face recognition (PPFR) enables face recognition without leaking facial information. In this work, our proposed Pura fulfills PPFR through encrypting facial information, e.g., feature vectors.

Privacy-preserving face recognition can be formulated as:
\begin{equation}
    \label{eq:face_recognition}
    \Gamma(\llbracket p \rrbracket, \llbracket \mathcal{D} \rrbracket)=
    \begin{cases}
    \llbracket ID \rrbracket, &\Omega^\diamond(\llbracket p \rrbracket, \llbracket \mathcal{D} \rrbracket) \leq \llbracket \epsilon \rrbracket\\
    \perp, &\text{\textit{otherwise}}
    \end{cases},
\end{equation}
where $\llbracket p \rrbracket$ represents an encrypted feature vector to be recognized, $\llbracket \mathcal{D} \rrbracket$ refers to an encrypted feature vector database, and $\llbracket \epsilon \rrbracket$ denotes an encrypted threshold. PPFR takes input $\llbracket p \rrbracket$ and $\llbracket \mathcal{D} \rrbracket$ and securely calculates $\Omega^\diamond(\llbracket p \rrbracket, \llbracket \mathcal{D} \rrbracket)$ to obtain the minimal distance between $\llbracket p \rrbracket$ and all encrypted feature vectors in $\llbracket \mathcal{D} \rrbracket$. In this work, the distance metric is calculated by the square of Euclidean distance.

Note that a comparison operation over the ciphertexts is performed between a threshold $\llbracket \epsilon \rrbracket$ and the minimum distance between $\llbracket p \rrbracket$ and $\llbracket \mathcal{D} \rrbracket$. If $\Omega(p, \mathcal{D}) \leq \epsilon$, where $\Omega(p, \mathcal{D})$ means the minimum square of Euclidean distance between $p$ and $\mathcal{D}$, PPFR returns an encrypted identity, otherwise, returns $\perp$.

\subsection{Overview}

\begin{figure*}
	\centering
	\includegraphics[scale=0.54]{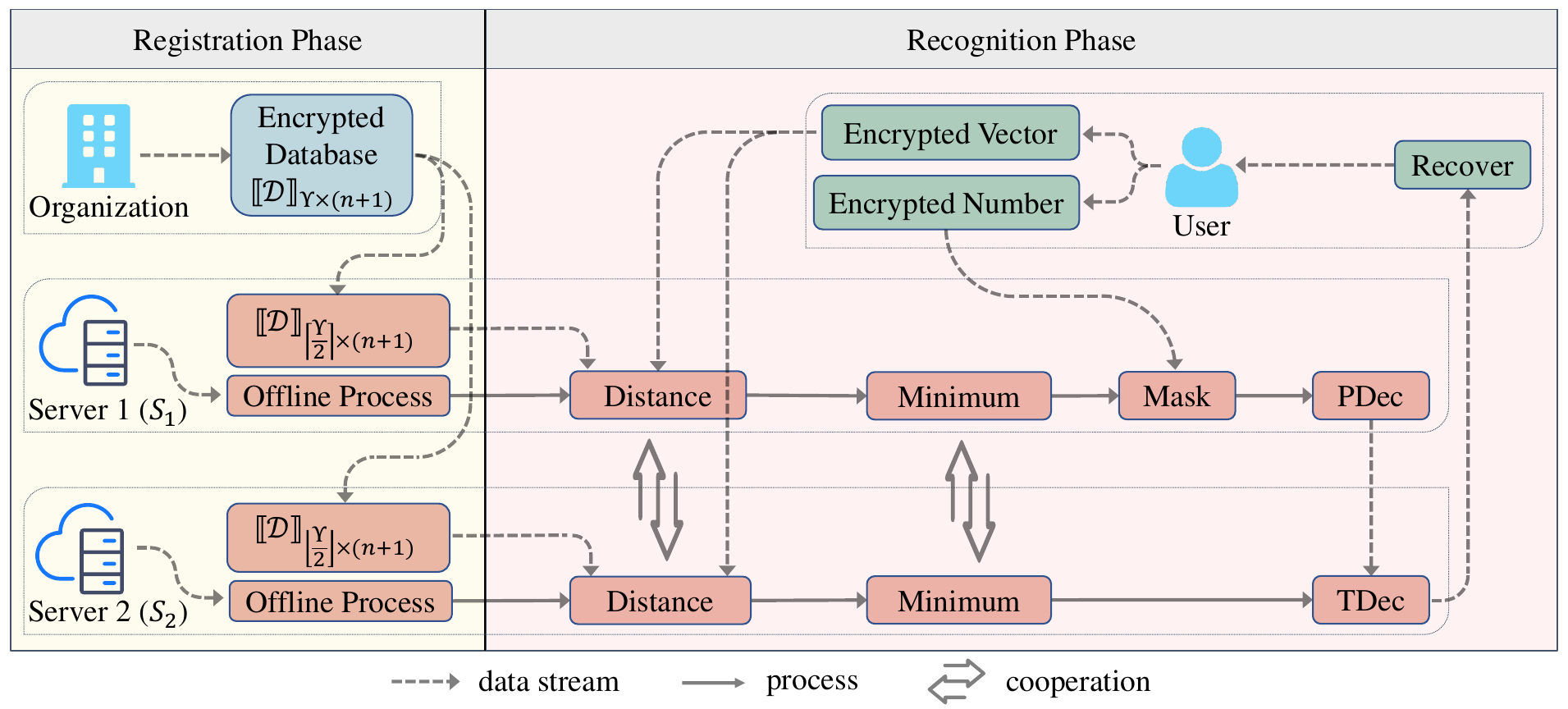}
	\caption{System overview of Pura.}
	\label{fig:main_workflow}
\end{figure*}

Our solution consists of two phases: \textbf{the registration phase} and \textbf{the recognition phase}. The workflow of our proposed Pura is sketched in Fig. \ref{fig:main_workflow}. 

During the registration phase, the organization encrypts a feature vector database and horizontally splits it into two parts, which are sent to $S_1$ and $S_2$, respectively. Additionally, each server performs offline operations in advance following the work \cite{zhao2023soci}. 

During the recognition phase, the user sends an encrypted feature vector to $S_1$ and $S_2$, along with an encrypted random number specifically to $S_1$. Then, based on the proposed protocols and the parallel mechanism, $S_1$ and $S_2$ parallelly and jointly accomplish the square of Euclidean distance and minimum process. After that, $S_1$ masks the result, $S_2$ decrypts and returns the masked result to the user. Finally, the user recovers and obtains the recognition result.

\subsection{Registration Phase}

\subsubsection{Data Transmission}
During the registration phase, the organization needs to encrypt the feature vector database and store the encrypted feature vector database on the twin servers. As shown in Fig. \ref{fig:main_workflow}, we divide the encrypted feature vector database into two parts and send them to two servers separately, instead of storing the encrypted feature vector database on a single server. Formally, the organization encrypts each value in the feature vector database $\mathcal{D}$, denoted as $\llbracket \mathcal{D} \rrbracket \gets$\texttt{Enc}($\mathcal{D}_{\Upsilon\times (n+1)}$) =
$$
\begin{pmatrix}
\llbracket ID_1 \rrbracket & \llbracket v_{1,1} \rrbracket & \llbracket v_{1,2} \rrbracket & \cdots & \llbracket v_{1,n} \rrbracket & \\
\llbracket ID_2 \rrbracket & \llbracket v_{2,1} \rrbracket & \llbracket v_{2,2} \rrbracket & \cdots & \llbracket v_{2,n} \rrbracket & \\
\vdots & \vdots & \vdots & \ddots & \vdots & \\
\llbracket ID_{\Upsilon} \rrbracket & \llbracket v_{\Upsilon,1} \rrbracket & \llbracket v_{\Upsilon,2} \rrbracket & \cdots & \llbracket v_{\Upsilon,n} \rrbracket &
\end{pmatrix}.
$$
In this work, the first $\lfloor \frac{\Upsilon}{2} \rfloor$ rows of $\llbracket \mathcal{D} \rrbracket$ are sent and stored on $S_2$, while the rest of $\llbracket \mathcal{D} \rrbracket$ are sent and stored on $S_1$. In this case, each server that holds half of the encrypted feature vector database can perform the protocols simultaneously, which fully takes advantage of the computation capability of each server.

\subsubsection{Offline Mechanism}
Inspired by the offline mechanism in the study \cite{zhao2023soci}, during the registration phase, we also utilize the offline mechanism. Specifically, $S_1$ and $S_2$ respectively generate $\llbracket r_1 \rrbracket$, $\llbracket r_2 \rrbracket$, $\llbracket -r_1r_2 \rrbracket$, $\llbracket \delta(r_1+r_2) \rrbracket$, $-r_1$, $-r_2$, $L$, $\llbracket r_1+r_2 \rrbracket$, $r$, $r_1$, $R$, and $\llbracket R \rrbracket$, which can be directly used in the recognition phase. Note that the quantity of random numbers generated is determined by the demands of the recognition phase.

\subsection{Recognition Phase}
\subsubsection{Square of Euclidean Distance}
During the recognition phase, the user encrypts a probe feature vector ($\llbracket p \rrbracket \gets$\texttt{Enc}($p$) = ($\llbracket u_1 \rrbracket$, $\llbracket u_2 \rrbracket$, $\cdots$, $\llbracket u_n \rrbracket$)) and then sends $\llbracket p \rrbracket$ to $S_1$ and $S_2$ at the same time. After obtaining the encrypted probe feature vector $\llbracket p \rrbracket$, $S_1$ and $S_2$ simultaneously compute the square of Euclidean distance between $\llbracket p \rrbracket$ and the item in $\llbracket \mathcal{D} \rrbracket$. Specifically, $S_1$ and $S_2$ first calculate the difference among feature vectors separately, which is formulated as
\begin{equation}
    \llbracket Q \rrbracket = 
    \begin{pmatrix}
        \llbracket u_1 \rrbracket\cdot\llbracket v_{1,1} \rrbracket^{N-1} & \cdots & \llbracket u_n \rrbracket\cdot\llbracket v_{1,n} \rrbracket^{N-1} & \\
        \vdots & \ddots & \vdots & \\
        \llbracket u_1 \rrbracket\cdot\llbracket v_{\phi,1} \rrbracket^{N-1} & \cdots & \llbracket u_n \rrbracket\cdot\llbracket v_{\phi,n} \rrbracket^{N-1} &
    \end{pmatrix},
\end{equation}
where $\phi$ is the rows of $\llbracket \mathcal{D} \rrbracket$ that each server holds.
After obtaining $\llbracket Q \rrbracket$, $S_1$ and $S_2$ parallelly and jointly perform \BatchSquare~to compute the square of every ciphertext in $\llbracket Q \rrbracket$. The result is denoted as a matrix $\llbracket E \rrbracket$ with $\phi$ rows and $n$ columns of items $\llbracket e \rrbracket$. Once they finish \BatchSquare, they can locally compute the square of Euclidean distances. According to the additive homomorphism, they compute the sum of each row in $\llbracket E \rrbracket$, which is defined as
\begin{equation}
\label{eq:square_dist}
    \llbracket D_k \rrbracket = (\llbracket d_1 \rrbracket, \cdots, \llbracket d_\phi \rrbracket) = (\prod_{j = 1}^{j = n} \llbracket e_{1, j}  \rrbracket, \cdots, \prod_{j = 1}^{j = n} \llbracket e_{\phi, j}  \rrbracket),
\end{equation}
where $S_1$ and $S_2$ obtain $\llbracket D_1 \rrbracket$ and $\llbracket D_2 \rrbracket$, respectively.

\subsubsection{Minimum Distance}
(i) $S_1$ and $S_2$ jointly call $n$-\SMIN~and take input $\llbracket D_1 \rrbracket$ and $\llbracket D_2 \rrbracket$, respectively. Note that $n$ of $n$-\SMIN~is the size of $\llbracket D_i \rrbracket$ for $S_k$, where $k \in \{1, 2\}$. After that, $S_1$ obtains the minimum result $\llbracket d_{S_1} \rrbracket$ among $\llbracket D_1 \rrbracket$, and $S_2$ gets $\llbracket d_{S_2} \rrbracket$ in the same way. (ii) $S_2$ sends $\llbracket d_{S_2} \rrbracket$ to $S_1$. Then $S_1$ inputs two minimum results $\llbracket d_{S_1} \rrbracket$ and $\llbracket d_{S_2} \rrbracket$, as well as an encrypted threshold $\llbracket \epsilon \rrbracket$, and calls an extra $n$-\SMIN~protocol with $S_2$ to obtain the encrypted recognition result $\llbracket \gamma \rrbracket$. Specifically, $n$ of the extra $n$-\SMIN~protocol is 3.

\subsubsection{Mask and Recover}
Upon obtaining $\llbracket \gamma \rrbracket$, $S_1$ masks $\llbracket \gamma \rrbracket$ with $\llbracket R \rrbracket$ via the additive homomorphism, i.e., $\llbracket \gamma + R \rrbracket$ = $\llbracket \gamma \rrbracket \cdot \llbracket R \rrbracket$, where $R \stackrel{\$}{\gets} \{0,1\}^\sigma$. Note that the random number $R$ is generated and encrypted as $\llbracket R \rrbracket$ by the user. $\llbracket R \rrbracket$ is sent to $S_1$ along with the encrypted feature vector. Next, $S_1$ calls \texttt{PDec} to partially decrypt $\llbracket \gamma + R \rrbracket$ and sends the ciphertexts to $S_2$. $S_2$ obtains $\gamma + R$ by calling \texttt{PDec} and \texttt{TDec}. Eventually, the user receives $\gamma + R$ from $S_2$, and easily recovers the recognition result $\gamma$.

\section{Privacy Analysis}
\label{sec:analysis}

In this section, we demonstrate that the proposed secure computing protocols used to construct Pura prevent users' facial information from being leaked.

Due to the semantically secure property of threshold Paillier cryptosystems \cite{lysyanskaya2001adaptive, liu2016privacy}, it has been widely adopted to protect users' images by performing operations in the encrypted domain. Zhao \textit{et al.} \cite{zhao2022soci} proved that $x+r$ is a chosen-plaintext attack (CPA) secure one-time key encryption scheme. According to the computational indistinguishability experiment between an adversary and a challenger presented in the work \cite{katz2007introduction}, we denote the computational indistinguishability experiment as PriR$_{\mathcal{A},x+r}^{cpa}$ and present a detailed description in the following.

\begin{enumerate}
    \item An adversary $\mathcal{A}$ randomly chooses two random numbers $x_0$, $x_1$ such that $x_0, x_1 \in [-2^\ell, 2^\ell]$;

    \item A challenger randomly selects $r \stackrel{\$}{\gets} \{0,1\}^\sigma$ and $b \stackrel{\$}{\gets} \{0,1\}$, where $\sigma$ is a secure parameter. The challenger then calculates $x_b + r$ and returns to $\mathcal{A}$.

    \item $\mathcal{A}$ yields a bit $b'$.

    \item The experiment outputs 1 if $b' = b$; otherwise, it outputs 0. In the case when the output is 1, PriR$_{\mathcal{A},x+r}^{cpa}$ equals to 1, which means $\mathcal{A}$ succeeds.
\end{enumerate}

\begin{theorem}
    \label{theo:x+r}
    For two random numbers $x_0, x_1 \in [-2^\ell, 2^\ell]$, $x_0 + r_0$ and $x_1 + r_1$ are considered computationally indistinguishable, where $r_0, r_1 \stackrel{\$}{\gets} \{0,1\}^\sigma$. More precisely, given two random numbers $x_b \in [-2^\ell, 2^\ell]$ and $r \stackrel{\$}{\gets} \{0, 1\}^\sigma$, the formula Pr$[b'=b|x_b+r] \leq \frac{1}{2}$ + negl($\sigma$) always holds, where negl($\sigma$) is a negligible function of $\sigma$ and $b, b' \stackrel{\$}{\gets} \{0, 1\}$. The study \cite{zhao2022soci} offers the proof details.
\end{theorem}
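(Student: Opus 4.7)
The plan is to reduce the statement to a statistical-distance calculation and then appeal to the standard fact that a statistical-distance bound on challenge distributions upper-bounds the CPA distinguishing advantage. First I would fix the two relevant distributions. Since $r \stackrel{\$}{\gets} \{0,1\}^\sigma$, the random variable $x_b + r$ is uniformly distributed over the integer interval $[x_b,\ x_b + 2^\sigma - 1]$, which has exactly $2^\sigma$ points. For $b\in\{0,1\}$ these two supports differ only near their endpoints, and the symmetric difference has size at most $2|x_0 - x_1| \leq 2 \cdot 2^{\ell+1} = 2^{\ell+2}$.

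Next I would bound the statistical distance $\Delta$ between the law of $x_0+r_0$ and the law of $x_1+r_1$. Because each distribution places mass exactly $2^{-\sigma}$ on every point of its support, every element of the symmetric difference contributes at most $2^{-\sigma}$ to the total variation, giving $\Delta \leq \tfrac{1}{2} \cdot 2^{\ell+2} \cdot 2^{-\sigma} = 2^{\ell+1-\sigma}$. Since the paper takes $\sigma$ such that $2^\sigma \gg 2^\ell$, this quantity is negligible in $\sigma$, i.e., $\Delta = \mathrm{negl}(\sigma)$.

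Finally, I would invoke the well-known bound that for any (even computationally unbounded) adversary $\mathcal{A}$ in the experiment PriR$_{\mathcal{A},x+r}^{cpa}$, the success probability satisfies $\Pr[b'=b \mid x_b+r] \leq \tfrac{1}{2} + \Delta$, where $\Delta$ is the statistical distance between the challenge distributions for $b=0$ and $b=1$. Combining this inequality with the bound from the previous step gives $\Pr[b'=b \mid x_b+r] \leq \tfrac{1}{2} + \mathrm{negl}(\sigma)$, which is exactly the claim.

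This argument is essentially the one-time-pad principle applied to an integer-valued mask, so there is no deep conceptual obstacle. The main care point is handling the endpoints of the two shifted intervals correctly so that the symmetric-difference count $2^{\ell+2}$ is honest and does not hide a factor depending on $\ell$; once that is pinned down, the negligibility of $2^{\ell+1-\sigma}$ in $\sigma$ is immediate from the parameter choice, and my proof matches the detailed argument already given in \cite{zhao2022soci}.
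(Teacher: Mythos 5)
Your argument is correct: masking $x_b$ with a uniform $\sigma$-bit $r$ yields two uniform distributions on shifted intervals whose statistical distance is at most $2^{\ell+1-\sigma}$, and the standard bound relating distinguishing advantage to statistical distance then gives $\Pr[b'=b\mid x_b+r]\leq \frac{1}{2}+\mathrm{negl}(\sigma)$. The paper itself provides no proof of this theorem, deferring entirely to \cite{zhao2022soci}, and your one-time-pad/statistical-closeness argument is precisely the standard route taken there, so there is nothing further to reconcile.
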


\begin{corollary}
\label{cor:batchsmul}
Assuming a pair of ciphertexts of length $s$: $\llbracket x_1 \rrbracket, \llbracket x_2 \rrbracket, \cdots, \llbracket x_s \rrbracket$ and $\llbracket y_1 \rrbracket, \llbracket y_2 \rrbracket, \cdots, \llbracket y_s \rrbracket$, where $x_i, y_i \in [-2^\ell, 2^\ell]$, the proposed \BatchSMUL~protocol securely computes $\llbracket x_1y_1 \rrbracket, \llbracket x_2y_2 \rrbracket, \cdots, \llbracket x_sy_s \rrbracket$ under non-colluding attackers $\mathcal{A} = (\mathcal{A}_{S1}, \mathcal{A}_{S2}$), where $\mathcal{A}_{S1}$ and $\mathcal{A}_{S2}$ denote $S_1$ and $S_2$ as polynomial-time adversaries, respectively. That means \BatchSMUL~does not leak $x_i$ and $y_i$ to $\mathcal{A}_{S1}$ and $\mathcal{A}_{S2}$.
\end{corollary}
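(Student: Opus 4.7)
The plan is to prove the corollary in the standard simulation-based semi-honest paradigm. Since $\mathcal{A}_{S1}$ and $\mathcal{A}_{S2}$ do not collude, I can handle each one independently by constructing a PPT simulator $\mathsf{Sim}_k$ ($k \in \{1,2\}$) that, given only $S_k$'s input ciphertexts and the protocol's output, produces a distribution computationally indistinguishable from $S_k$'s real view in \BatchSMUL. Privacy of the $x_i$ and $y_i$ then follows because no simulator is ever supplied with these plaintexts.

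For $S_1$, the real view consists of its input ciphertexts $\{\llbracket x_i \rrbracket, \llbracket y_i \rrbracket\}$, the fresh randomness $\{r_{i,1}, r_{i,2}\}$, the outgoing pair $\langle \llbracket C \rrbracket, \llbracket C_1 \rrbracket \rangle$ (both of which it generates itself), and the single inbound message $\{\llbracket (x_i+r_{i,1})(y_i+r_{i,2}) \rrbracket\}_{i=1}^{s}$ returned by $S_2$. Because $S_1$ holds only the partial key $sk_1$, the semantic (IND-CPA) security of the threshold Paillier cryptosystem implies that these inbound ciphertexts are computationally indistinguishable from fresh encryptions of any fixed constants. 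Hence $\mathsf{Sim}_{1}$ samples $r_{i,1}, r_{i,2}$ honestly, recomputes $\llbracket C \rrbracket$ and $\llbracket C_1 \rrbracket$ locally, and replaces the inbound ciphertexts with encryptions of zero; a standard hybrid over the $s$ ciphertexts finishes this direction.

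For $S_2$, the view consists of the inbound $\langle \llbracket C \rrbracket, \llbracket C_1 \rrbracket \rangle$ together with the recovered plaintext $C$, from which $S_2$ unpacks $x_i + r_{i,1}$ and $y_i + r_{i,2}$ for $i \in [1,s]$. The key step is to invoke Theorem~\ref{theo:x+r}: each masked value $x_i + r_{i,1}$ (and likewise $y_i + r_{i,2}$) is computationally indistinguishable from $\tilde{x}_i + r$ for an arbitrary $\tilde{x}_i \in [-2^\ell, 2^\ell]$. Therefore $\mathsf{Sim}_{2}$ chooses arbitrary $\tilde{x}_i, \tilde{y}_i$, draws fresh $\sigma$-bit masks, packs the resulting values into a simulated $\llbracket \tilde{C} \rrbracket$ using the same coefficients $L^{2i-1}, L^{2i-2}$ as the real protocol, and produces $\llbracket \tilde{C}_1 \rrbracket$ via one honest partial decryption. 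A slot-by-slot hybrid then yields indistinguishability.

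The main obstacle is not the reduction itself but the bookkeeping that makes the reduction clean: one must verify that with $L \geq 2^{\sigma+2}$ no carry propagates between adjacent slots, so that $C$ factors exactly into the $2s$ one-time-padded values, and that $2^\sigma \gg 2^\ell$ so Theorem~\ref{theo:x+r} applies with negligible bias at each slot. Both of these constraints, together with the packing bound $s \leq \lfloor |N|/2|L| \rfloor$ that prevents overflow within a single Paillier message, are exactly the parameter choices stipulated in Section~\ref{subsec:mul}; once they are verified, the remainder of the argument is a routine application of IND-CPA security and Theorem~\ref{theo:x+r}.
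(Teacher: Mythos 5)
Your proposal is correct and follows essentially the same route as the paper: a simulation-based semi-honest argument with independent simulators for $S_1$ and $S_2$, where $S_1$'s inbound ciphertexts are handled via the semantic security of the threshold Paillier cryptosystem and $S_2$'s recovered plaintexts $x_i+r_{i,1}$, $y_i+r_{i,2}$ are handled via the one-time-mask indistinguishability of Theorem~\ref{theo:x+r}. Your version is in fact somewhat more explicit than the paper's, which omits the slot-by-slot hybrid and the verification that $L \geq 2^{\sigma+2}$ prevents carries between packed slots, but the underlying argument is the same.
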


\begin{proof}
\label{pro:batchsmul}
We now construct the independent simulators ($\mathcal{S}_{S1}, \mathcal{S}_{S2}$), which simulates $S_1$ and $S_2$, respectively.

$\mathcal{S}_{S1}$ simulates the view of $\mathcal{A}_{S1}$ in the real in the following:
\begin{enumerate}
    \item $\mathcal{S}_{S1}$ takes $\langle \llbracket x_i \rrbracket, \llbracket y_i \rrbracket, \llbracket (x_i+r_{i,1})\cdot(y_i+r_{i,2}) \rrbracket \rangle$ as inputs and then randomly chooses $\hat{x}_i, \hat{y}_i \in [-2^\ell, 2^\ell]$ and $\hat{r}_{i,1}, \hat{r}_{i,2}, \hat{\lambda}_1 \stackrel{\$}{\gets} \{0, 1\}^\sigma$.
    
    \item $\mathcal{S}_{S1}$ calls \texttt{Enc} to encrypt $\hat{x}_i, \hat{y}_i, \hat{x}_i+\hat{r}_{i,1}+\delta, \hat{y}_i+\hat{r}_{i,2}+\delta, -\hat{r}_{i,2}\cdot(\hat{x}_i+\delta), -\hat{r}_{i,1}\cdot(\hat{y}_i+\delta), -\hat{r}_{i,1}\cdot\hat{r}_{i,2}, \delta\cdot(\hat{r}_{i,1}+\hat{r}_{i,2})$ into $\llbracket \hat{x}_i \rrbracket, \llbracket \hat{y}_i \rrbracket, \llbracket \hat{X}_i \rrbracket, \llbracket \hat{Y}_i \rrbracket, \llbracket -\hat{r}_{i,2}\cdot(\hat{x}_i+\delta), \rrbracket, \llbracket -\hat{r}_{i,1}\cdot(\hat{y}_i+\delta) \rrbracket, \llbracket -\hat{r}_{i,1}\cdot\hat{r}_{i,2} \rrbracket, \llbracket \delta\cdot(\hat{r}_{i,1}+\hat{r}_{i,2}) \rrbracket$, respectively. Then, $\mathcal{S}_{S1}$ calculates $\llbracket \hat{C} \rrbracket = \prod_{i=1}^{i=s} \llbracket \hat{X}_i \rrbracket^{L^{2i-1}} \cdot \llbracket \hat{Y}_i \rrbracket^{L^{2i-2}}$, and $\llbracket \hat{C_1} \rrbracket = \llbracket \hat{C} \rrbracket^{\hat{\lambda}_1}$ by calling \texttt{PDec}, and $\llbracket \hat{x}_i\cdot\hat{y}_i \rrbracket = \llbracket (x_i+r_{i,1})\cdot(y_i+r_{i,2}) \rrbracket \cdot \llbracket -\hat{r}_{i,2}\cdot(\hat{x}_i+\delta) \rrbracket \cdot \llbracket -\hat{r}_{i,1}\cdot(\hat{y}_i+\delta) \rrbracket \cdot \llbracket -\hat{r}_{i,1}\cdot\hat{r}_{i,2} \rrbracket \cdot \llbracket \delta\cdot(\hat{r}_{i,1}+\hat{r}_{i,2}) \rrbracket$.
    
    \item $\mathcal{S}_{S1}$ yields $\langle \llbracket \hat{x}_i \rrbracket$, $\llbracket \hat{y}_i \rrbracket$, $\llbracket \hat{X}_i \rrbracket$, $\llbracket \hat{Y}_i \rrbracket$, $\llbracket \hat{C} \rrbracket$, $\llbracket \hat{C}_1 \rrbracket$, $\llbracket -\hat{r}_{i,2}\cdot(\hat{x}_i+\delta), \rrbracket$, $\llbracket -\hat{r}_{i,1}\cdot(\hat{y}_i+\delta) \rrbracket$, $\llbracket -\hat{r}_{i,1}\cdot\hat{r}_{i,2} \rrbracket$, and $\llbracket \delta\cdot(\hat{r}_{i,1}+\hat{r}_{i,2}) \rrbracket \rangle$ as the simulation of $\mathcal{A}_{S1}$'s entire view.
\end{enumerate}

$\mathcal{A}_{S1}$ only learns $\llbracket \hat{C} \rrbracket$ and $\llbracket \hat{C}_1 \rrbracket$. Since the (2,2)-threshold Paillier cryptosystem is semantically secure, $\mathcal{A}_{S_1}$ cannot infer $\llbracket \hat{x}_i \rrbracket, \llbracket \hat{y}_i \rrbracket, \llbracket \hat{x}_i \cdot \hat{y}_i \rrbracket$ from $\llbracket \hat{C} \rrbracket$ and $\llbracket \hat{C}_1 \rrbracket$. Thus, $\llbracket \hat{x}_i \rrbracket$ and $\llbracket x_i \rrbracket$, $\llbracket \hat{y}_i \rrbracket$ and $\llbracket y_i \rrbracket$, $\llbracket \hat{C} \rrbracket$ and $\llbracket C \rrbracket$, $\llbracket \hat{C}_1 \rrbracket$ and $\llbracket C_1 \rrbracket$ are computationally indistinguishable.

$\mathcal{S}_{S2}$ simulates the view of $\mathcal{A}_{S2}$ in the real in the following:
\begin{enumerate}
    \item $\mathcal{S}_{S2}$ takes $\langle \llbracket \sum_{i=s}^{i=1}L^{2i-1}\cdot(x_i+r_{i,1}+\delta)+L^{2i-2}\cdot(y_i+r_{i,2}+\delta) \rrbracket, \llbracket \sum_{i=s}^{i=1}L^{2i-1}\cdot(x_i+r_{i,1}+\delta)+L^{2i-2}\cdot(y_i+r_{i,2}+\delta) \rrbracket^{\lambda_1} \rangle$ as inputs and then randomly chooses $\bar{x}_i, \bar{y}_i\in [-2^\ell, 2^\ell]$ and $\bar{r}_{i,1}, \bar{r}_{i,2}, \bar{\lambda}_1, \bar{\lambda}_2 \stackrel{\$}{\gets} \{0, 1\}^\sigma$.

    \item $\mathcal{S}_{S2}$ calls \texttt{Enc} to encrypt $\bar{x}_i+\bar{r}_{i,1}, \bar{y}_i+\bar{r}_{i,2}, (\bar{x}_i+\bar{r}_{i,1})\cdot(\bar{y}_i+\bar{r}_{i,2})$ into $\llbracket \bar{X}_i \rrbracket, \llbracket \bar{Y}_i \rrbracket, \llbracket (\bar{x}_i+\bar{r}_{i,1})\cdot(\bar{y}_i+\bar{r}_{i,2}) \rrbracket$, respectively, and calculates $\llbracket\bar{X}_{i,1}\rrbracket=\llbracket\bar{X}_i\rrbracket^{\bar{\lambda}_1}, \llbracket\bar{X}_{i,2}\rrbracket=\llbracket\bar{X}_i\rrbracket^{\bar{\lambda}_2}, \llbracket\bar{Y}_{i,1}\rrbracket=\llbracket\bar{Y}_i\rrbracket^{\bar{\lambda}_1}, \llbracket\bar{Y}_{i,2}\rrbracket=\llbracket\bar{Y}_i\rrbracket^{\bar{\lambda}_2}$ by calling \texttt{PDec}.

    \item $\mathcal{S}_{S2}$ yields $\langle \llbracket\bar{X}_i\rrbracket, \llbracket\bar{X}_{i,1}\rrbracket, \llbracket\bar{X}_{i,2}\rrbracket, \llbracket\bar{Y}_i\rrbracket, \llbracket\bar{Y}_{i,1}\rrbracket, \llbracket\bar{Y}_{i,2}\rrbracket, \bar{x}_i+\bar{r}_{i,1}, \bar{y}_i+\bar{r}_{i,2}, (\bar{x}_i+\bar{r}_{i,1})\cdot(\bar{y}_i+\bar{r}_{i,2}), \llbracket (\bar{x}_i+\bar{r}_{i,1})\cdot(\bar{y}_i+\bar{r}_{i,2}) \rrbracket \rangle$ as the simulation of $\mathcal{A}_{S2}$'s entire view.
\end{enumerate}

In the view of $\mathcal{A}_{S2}$, $\mathcal{A}_{S2}$ only learns $(\bar{x}_i+\bar{r}_{i,1})\cdot(\bar{y}_i+\bar{r}_{i,2})$, so it fails to get $\bar{x}_i \cdot \bar{y}_i$. As the (2,2)-threshold Paillier cryptosystem is semantically secure, $\llbracket \bar{x}_i \rrbracket$ and $\llbracket x_i \rrbracket$, $\llbracket \bar{y}_i \rrbracket$ and $\llbracket y_i \rrbracket$ are computationally indistinguishable, demonstrating that $\mathcal{S}_{S2}$ in an ideal execution and $\mathcal{A}_{S2}$ in the real world are computationally indistinguishable.

Generally speaking, \BatchSMUL~is secure to calculate $\llbracket x_i \cdot y_i \rrbracket$ and does not leak $x_i$ and $y_i$ to $\mathcal{A}_{S1}$ and $\mathcal{A}_{S2}$.

\end{proof}

\begin{corollary}
\label{cor:batchsquare}
    Taking $s$ ciphertexts $\llbracket x_1 \rrbracket, \llbracket x_2 \rrbracket, \cdots, \llbracket x_s \rrbracket$, where $-2^\ell \leq x_i \leq 2^\ell$, the proposed \BatchSquare~protocol securely calculates $\llbracket x_1^2 \rrbracket, \llbracket x_2^2 \rrbracket, \cdots, \llbracket x_s^2 \rrbracket$ under non-colluding attackers $\mathcal{A} = (\mathcal{A}_{S1}, \mathcal{A}_{S2}$).
\end{corollary}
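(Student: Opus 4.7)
The plan is to mirror the simulation-based argument already given for Corollary 1, exploiting the fact that \BatchSquare~is structurally a specialization of \BatchSMUL~in which the two operand groups coincide and the packing collapses from exponents $L^{2i-1}, L^{2i-2}$ to a single $L^{i-1}$, with a single mask $r_i$ per slot instead of a pair $(r_{i,1}, r_{i,2})$. Accordingly, I would construct two independent simulators $(\mathcal{S}_{S1}, \mathcal{S}_{S2})$ for the non-colluding adversaries $(\mathcal{A}_{S1}, \mathcal{A}_{S2})$ and argue, in each case, computational indistinguishability between the simulated view and the real view, relying on (i) the semantic security of the (2,2)-threshold Paillier cryptosystem and (ii) Theorem \ref{theo:x+r} for the masked values that actually appear in plaintext to $S_2$.

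First I would build $\mathcal{S}_{S1}$. The real view of $\mathcal{A}_{S1}$ consists of the inputs $\llbracket x_i \rrbracket$, the intermediate ciphertexts $\llbracket X_i \rrbracket$, the packed $\llbracket C \rrbracket$ together with the partial decryption $\llbracket C_1 \rrbracket$, the ciphertexts $\llbracket (x_i + r_i)^2 \rrbracket$ returned by $S_2$, and the locally computed $\llbracket -2r_i(x_i+\delta) \rrbracket$, $\llbracket -r_i^2 \rrbracket$, $\llbracket 2\delta r_i \rrbracket$, $\llbracket x_i^2 \rrbracket$. The simulator draws fresh $\hat{x}_i \in [-2^\ell, 2^\ell]$ and $\hat{r}_i, \hat{\lambda}_1 \stackrel{\$}{\gets} \{0,1\}^\sigma$, then re-encrypts each quantity using \texttt{Enc} under the public key and re-packs them exactly as in Step~1 of Algorithm~\ref{alg:BatchSquare} to produce $\llbracket \hat{C} \rrbracket$ and $\llbracket \hat{C}_1 \rrbracket \gets \llbracket \hat{C} \rrbracket^{\hat{\lambda}_1}$. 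Because every object handed to $\mathcal{A}_{S1}$ is a Paillier ciphertext, semantic security yields that $\llbracket \hat{x}_i \rrbracket \approx_c \llbracket x_i \rrbracket$, $\llbracket \hat{C} \rrbracket \approx_c \llbracket C \rrbracket$, $\llbracket \hat{C}_1 \rrbracket \approx_c \llbracket C_1 \rrbracket$, and likewise for the auxiliary ciphertexts, so $\mathcal{A}_{S1}$ learns nothing about the plaintexts $x_i$ or $x_i^2$.

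Next I would build $\mathcal{S}_{S2}$. This is the more delicate case because $\mathcal{A}_{S2}$ actually recovers $C$ in plaintext via \texttt{TDec}, unpacks it into the slot values $x_i + r_i + \delta$, and after subtracting $\delta$ learns $x_i + r_i$ before squaring and re-encrypting. The simulator samples $\bar{x}_i \in [-2^\ell, 2^\ell]$ and $\bar{r}_i, \bar{\lambda}_1, \bar{\lambda}_2 \stackrel{\$}{\gets} \{0,1\}^\sigma$, computes $\bar{x}_i + \bar{r}_i$ and its square in the clear, encrypts to $\llbracket (\bar{x}_i + \bar{r}_i)^2 \rrbracket$, and assembles a consistent $\llbracket \bar{X}_i \rrbracket$, $\llbracket \bar{C} \rrbracket$, and partial decryptions. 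Indistinguishability from the real view then reduces to: (a) semantic security for all ciphertext components, and (b) Theorem \ref{theo:x+r} for the plaintext slots, which guarantees that $x_i + r_i$ and $\bar{x}_i + \bar{r}_i$ are computationally indistinguishable up to advantage $\text{negl}(\sigma)$ since $r_i$ is a $\sigma$-bit one-time mask on a value of magnitude at most $2^\ell$.

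The main obstacle is handling the packed plaintext $C$ revealed to $S_2$, where several masked values $x_i + r_i + \delta$ coexist inside a single integer. I would argue that the condition $L \geq 2^{\sigma+2}$ enforced in Section~\ref{subsec:mul} prevents any carry between slots, so the unpacking yields independent one-time-padded values; then a standard hybrid over the $s$ slots extends Theorem \ref{theo:x+r} from a single masked value to the whole tuple, absorbing only a factor $s$ into the negligible advantage. Combining the two simulator arguments with the non-colluding assumption on $(\mathcal{A}_{S1}, \mathcal{A}_{S2})$ yields the corollary exactly as in the proof of Corollary \ref{cor:batchsmul}.
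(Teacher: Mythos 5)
Your proposal is correct and takes essentially the same route as the paper: the paper's own proof of Corollary~\ref{cor:batchsquare} is a one-line deferral to the simulation-based method of Corollary~\ref{cor:batchsmul}, and you carry out exactly that adaptation (simulators for $\mathcal{A}_{S1}$ and $\mathcal{A}_{S2}$, semantic security of threshold Paillier for the ciphertext components, and Theorem~\ref{theo:x+r} for the masked slot values seen by $S_2$). Your added observations about the no-carry condition $L \geq 2^{\sigma+2}$ and the hybrid over the $s$ slots are sound refinements that the paper leaves implicit.
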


\begin{proof}
    Corollary \ref{cor:batchsquare} can easily be proved following the proof method of Corollary \ref{cor:batchsmul}.
\end{proof}

\begin{theorem}
    \label{theo:half}
    For two random numbers $x_0, x_1$ such that $x_0, x_1 \in [-2^\ell, 2^\ell]$, given $d = r_1 \cdot (x-y+1) + r_2$ or $d = r_1 \cdot (y-x) + r_2$, where $r_1, r_2$ are two positive random numbers such that $r_1 \stackrel{\$}{\gets} \{0, 1\}^\sigma$\textbackslash $\{0\}, r_1 + r_2 > \frac{N}{2}$ and $r_2 \leq \frac{N}{2}$, if Pr$[d=r_1\cdot(x-y+1)+r_2]$ = Pr$[d=r_1\cdot(y-x)+r_2]$ = $\frac{1}{2}$, the probability of successfully comparing the relative size of $x$ and $y$ is $\frac{1}{2}$. More formally, for random number $x, y \in [-2^\ell, 2^\ell]$, Pr$[d>\frac{N}{2}]$ = Pr$[d \leq \frac{N}{2}]$ = $\frac{1}{2}$. Detailed proof can be found in the study \cite{zhao2022soci}.
\end{theorem}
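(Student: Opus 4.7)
The plan is to reduce the theorem to the fact that the secret bit $\pi \in \{0,1\}$ chosen uniformly by $S_1$ at Step~1 of 2-\SMIN{} acts as a one-time masking of the true ordering of $x$ and $y$. Specifically, I aim to show that the indicator $\mathbf{1}[d > \tfrac{N}{2}]$ observable from the decryption of $\llbracket D \rrbracket$ is statistically independent of the ordering bit; once this independence is established, any adversarial strategy that only observes $d$ can guess the order of $x$ and $y$ correctly with probability exactly $\tfrac{1}{2}$.

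First I would fix arbitrary $x, y \in [-2^\ell, 2^\ell]$ and carry out a four-way case analysis on the two signs of $x - y$ and the two values of $\pi$, using the parameter constraints $r_1 + r_2 > \tfrac{N}{2}$ and $r_2 \leq \tfrac{N}{2}$ guaranteed in Step~1. In the case $x < y$, I would check that $\pi = 0$ forces $d = r_1(x - y + 1) + r_2 \leq r_2 \leq \tfrac{N}{2}$, while $\pi = 1$ forces $d = r_1(y - x) + r_2 \geq r_1 + r_2 > \tfrac{N}{2}$; the symmetric case $x > y$ flips both conclusions. Combining these with the hypothesis $\Pr[\pi = 0] = \Pr[\pi = 1] = \tfrac{1}{2}$ yields $\Pr[d > \tfrac{N}{2} \mid x < y] = \Pr[d > \tfrac{N}{2} \mid x > y] = \tfrac{1}{2}$, so the bit $\mathbf{1}[d > \tfrac{N}{2}]$ carries no information about the order of $x$ and $y$, and marginally $\Pr[d > \tfrac{N}{2}] = \Pr[d \leq \tfrac{N}{2}] = \tfrac{1}{2}$.

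The main obstacle I anticipate is confirming that the parameter regime dictated by the $\sigma$-bit sampling of $r_1$ together with the bounded plaintext domain $[-2^\ell, 2^\ell]$ strictly places $d$ on the claimed side of $\tfrac{N}{2}$ for every admissible pair $(x, y)$, without spurious wrap-arounds modulo $N$ in either branch of the case analysis. Once that arithmetic range check is handled (and the boundary case $x = y$ is resolved by noting that either outcome is semantically acceptable to the protocol), the remaining argument reduces to a one-time-pad style hiding with $\pi$ as the key, and the detailed verification in \cite{zhao2022soci} carries over directly.
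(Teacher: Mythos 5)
Your proposal is correct, but it is worth noting that the paper itself does not supply a proof of this theorem at all: it simply defers to the cited SOCI work (\cite{zhao2022soci}) with the sentence ``Detailed proof can be found in the study.'' So your argument is not a variant of the paper's proof --- it is a self-contained reconstruction of the argument the paper outsources. The reconstruction is the right one: the four-way case analysis on $\mathrm{sign}(x-y)$ and $\pi$ shows that for $\pi=0$ one has $d\le r_2\le \tfrac{N}{2}$ exactly when $x<y$ and $d\ge r_1+r_2>\tfrac{N}{2}$ exactly when $x\ge y$, while for $\pi=1$ the two outcomes are exactly swapped; since $\pi$ is uniform and independent of $(x,y)$, the observable bit $\mathbf{1}[d>\tfrac{N}{2}]$ is uniform conditioned on either ordering, which is precisely the stated $\Pr[d>\tfrac{N}{2}]=\Pr[d\le\tfrac{N}{2}]=\tfrac12$ and the one-time-pad hiding of the comparison result from $S_2$. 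Your two flagged obstacles are also the right ones to flag: (i) the inequalities only hold without reduction modulo $N$, which requires $2^{\sigma+\ell+2}\ll N$ so that $r_2 - r_1\cdot 2^{\ell+1}$ stays positive and $r_2 + r_1(2^{\ell+1}+1)$ stays below $N$ (this holds for the paper's parameters but is nowhere stated as a hypothesis of the theorem); and (ii) the boundary case $x=y$ lands on the $x\ge y$ side under $\pi=0$ and the $y\le x$ side under $\pi=1$, still giving a uniform bit. What your approach buys is an explicit, checkable argument in place of a citation; what it costs is nothing, since the case analysis is elementary. One small presentational caveat: the theorem as stated in the paper is somewhat garbled (it introduces $x_0,x_1$ but then reasons about $x,y$, and the ``if $\Pr[d=\cdots]=\tfrac12$'' hypothesis is really the statement that $\pi$ is uniform); your reading of it as a statement about the 2-\SMIN{} masking is the intended one.
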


\begin{corollary}
\label{cor:2-smin}
Given two ciphertexts $\llbracket x \rrbracket$ and $\llbracket y \rrbracket$ such that $x, y \in [-2^\ell, 2^\ell]$, the proposed 2-\SMIN~protocol securely computes $\llbracket min(x,y) \rrbracket$ under non-colluding attackers $\mathcal{A} = (\mathcal{A}_{S1}, \mathcal{A}_{S2}$).
\end{corollary}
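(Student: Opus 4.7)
The plan is to follow the simulation paradigm used in the proof of Corollary \ref{cor:batchsmul}, constructing two independent simulators $\mathcal{S}_{S1}$ and $\mathcal{S}_{S2}$ for the non-colluding adversaries $\mathcal{A}_{S1}$ and $\mathcal{A}_{S2}$, and then arguing that each simulator's output is computationally indistinguishable from the corresponding real view. The proof will lean on two facts already established in the excerpt: the semantic security of the (2,2)-threshold Paillier cryptosystem, and Theorem \ref{theo:half}, which shows that the blinded quantity $D$ is uniformly distributed on either side of $\frac{N}{2}$.

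For $\mathcal{S}_{S1}$, I would have the simulator sample dummy plaintexts $\bar{x}, \bar{y} \in [-2^\ell, 2^\ell]$ together with dummy $\bar{\pi} \stackrel{\$}{\gets} \{0,1\}$, $\bar{r}_1 \stackrel{\$}{\gets} \{0,1\}^\sigma\setminus\{0\}$, and $\bar{r}_2$ chosen to satisfy the required size constraints. It then encrypts them to obtain $\llbracket \bar{x} \rrbracket, \llbracket \bar{y} \rrbracket$, locally recomputes $\llbracket \bar{D} \rrbracket$ and $\llbracket \bar{D}_1 \rrbracket \gets \texttt{PDec}(\llbracket \bar{D} \rrbracket)$ exactly as in Step 1, and fabricates a fresh encryption of either $\bar{x}$ or $\bar{y}$ as $\llbracket \bar{d}_0 \rrbracket$ according to $\bar{\pi}$. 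The entire view $\langle \llbracket \bar{x} \rrbracket, \llbracket \bar{y} \rrbracket, \llbracket \bar{D} \rrbracket, \llbracket \bar{D}_1 \rrbracket, \llbracket \bar{d}_0 \rrbracket \rangle$ is indistinguishable from the real view by semantic security applied ciphertext by ciphertext; note that $\mathcal{A}_{S1}$ never sees any plaintext, so no further argument beyond CPA security is needed.

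For $\mathcal{S}_{S2}$, the subtle part is that $\mathcal{A}_{S2}$ does recover the plaintext $D$. Here I would let the simulator pick fresh $\bar{x}, \bar{y}, \bar{r}, \bar{r}_1, \bar{r}_2, \bar{\pi}$ and compute $\bar{D}$ honestly from these dummies. By Theorem \ref{theo:half}, regardless of the ordering of the true $x, y$, the distribution of $D$ (and hence $\bar{D}$) satisfies $\Pr[D > \tfrac{N}{2}] = \Pr[D \leq \tfrac{N}{2}] = \tfrac{1}{2}$, and the large blinding mask $r_1 \cdot (x - y + 1) + r_2$ (or the symmetric variant) statistically hides $x - y$. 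Hence the simulated $\bar{D}$ is computationally indistinguishable from the real $D$ even though both are opened in the clear. The remaining elements of $\mathcal{A}_{S2}$'s view, namely $\llbracket x \rrbracket, \llbracket y \rrbracket, \llbracket D \rrbracket, \llbracket D_1 \rrbracket$ and the outgoing refreshed $\llbracket d_0 \rrbracket$, are handled by the same semantic security argument as before; in particular, the rerandomization multiplier $\llbracket 0 \rrbracket^r$ ensures $\llbracket d_0 \rrbracket$ reveals nothing beyond what $\llbracket x \rrbracket$ and $\llbracket y \rrbracket$ already do.

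The main obstacle I anticipate is precisely this distributional argument for the revealed value $D$: I need to verify that Theorem \ref{theo:half} applies uniformly over both choices of $\pi$ and that the assumption $r_1 + r_2 > \tfrac{N}{2}$ together with $r_2 \leq \tfrac{N}{2}$ truly makes the sign of $D - \tfrac{N}{2}$ independent of the sign of $x - y$. Once that is in place, concluding that $\mathcal{S}_{S2}$'s output is computationally indistinguishable from $\mathcal{A}_{S2}$'s real view is immediate, and combining the two simulator arguments yields that 2-\SMIN{} leaks neither $x$ nor $y$ to either party, establishing Corollary \ref{cor:2-smin}.
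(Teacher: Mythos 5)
Your proposal is correct and follows essentially the same route as the paper: the paper's proof likewise invokes the simulation method of Corollary \ref{cor:batchsmul}, observes that $\mathcal{A}_{S1}$ sees only ciphertexts (handled by semantic security of the threshold Paillier cryptosystem), and that the only plaintext $\mathcal{A}_{S2}$ recovers is $r_1(x-y)+(r_1+r_2)$ or $r_1(y-x)+r_2$, whose harmlessness is delegated to Theorems \ref{theo:x+r} and \ref{theo:half} exactly as you do. Your write-up is in fact more explicit than the paper's (which does not spell out the simulators), and the "main obstacle" you flag --- that the sign of $D-\tfrac{N}{2}$ must be independent of the ordering of $x$ and $y$ for both values of $\pi$ --- is precisely the content the paper outsources to Theorem \ref{theo:half}.
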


\begin{proof}
\label{pro:2-smin}
Following the proof method of Corollary \ref{cor:batchsmul}, Corollary \ref{cor:2-smin} is proved in the following.
\begin{enumerate}
    \item In the view of $\mathcal{A}_{S1}$, $\mathcal{A}_{S1}$ only learns the ciphertexts $\llbracket x \rrbracket, \llbracket y \rrbracket$, and $\llbracket r_1(x-y)+(r_1+r_2) \rrbracket$ or $\llbracket r_1(y-x)+r_2 \rrbracket$, so it fails to obtain $x, y$, and $min(x, y)$.

    \item In the view of $\mathcal{A}_{S2}$, $\mathcal{A}_{S2}$ can learn $r_1(x-y)+(r_1+r_2)$ or $r_1(y-x)+r_2$. However, according to Theorem \ref{theo:x+r} and Theorem \ref{theo:half}, $\mathcal{A}_{S2}$ fails to obtain $x, y$ and $min(x,y)$ from $r_1(x-y)+(r_1+r_2)$ or $r_1(y-x)+r_2$.
\end{enumerate}

In conclusion, 2-\SMIN~securely computes $\llbracket min(x, y) \rrbracket$ and does not leak $x$ and $y$ to $\mathcal{A}_{S1}$ and $\mathcal{A}_{S2}$.
\end{proof}

\begin{corollary}
\label{cor:n-smin}
    Given $n$ ciphertexts $\llbracket x_1 \rrbracket, \llbracket x_2 \rrbracket, \cdots, \llbracket x_n \rrbracket$, where $-2^\ell \leq x_i \leq 2^\ell$, the proposed $n$-\SMIN~protocol securely calculates $\llbracket min(x_1, x_2, \cdots, x_n) \rrbracket$ in a semi-honest (non-colluding) model.
\end{corollary}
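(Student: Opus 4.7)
The plan is to prove Corollary \ref{cor:n-smin} by induction on $n$, leveraging the security of 2-\SMIN~already established in Corollary \ref{cor:2-smin} together with the sequential composition property for semi-honest secure two-party protocols. The key observation is that $n$-\SMIN~(Algorithm \ref{alg:n-SMIN}) is simply $n-1$ sequential invocations of 2-\SMIN, where at iteration $t$ the running minimum ciphertext is paired with the next input ciphertext $\llbracket x_t \rrbracket$. Thus no new cryptographic primitive is introduced, and the argument reduces to showing that such a composition preserves privacy.

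The base case $n=2$ reduces directly to Corollary \ref{cor:2-smin}. For the inductive step, I would assume the claim holds for up to $n-1$ ciphertexts and then analyze the final invocation $\text{2-\SMIN}(\llbracket \min(x_1,\ldots,x_{n-1}) \rrbracket, \llbracket x_n \rrbracket)$. Following the simulator-based style of the proofs of Corollary \ref{cor:batchsmul} and Corollary \ref{cor:2-smin}, I would construct $\mathcal{S}_{S1}$ and $\mathcal{S}_{S2}$ that simulate the view of $\mathcal{A}_{S1}$ and $\mathcal{A}_{S2}$ by concatenating the simulated transcripts of the $n-1$ individual 2-\SMIN~calls. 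At each iteration the simulator substitutes the true running minimum with an encryption of a freshly sampled value $\hat{m}_t \in [-2^\ell, 2^\ell]$ and then invokes the 2-\SMIN~simulator on this substituted input together with $\llbracket x_t \rrbracket$. By the semantic security of the (2,2)-threshold Paillier cryptosystem, $\llbracket \hat{m}_t \rrbracket$ is computationally indistinguishable from $\llbracket \min(x_1,\ldots,x_t) \rrbracket$ for both adversaries, since neither $S_1$ nor $S_2$ alone can decrypt it.

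A standard hybrid argument across the $n-1$ iterations then shows that the real view and the simulated view are computationally indistinguishable: successively replacing each real intermediate ciphertext with an encryption of a random value incurs only a negligible distinguishing advantage per step, which remains negligible after a union bound since $n$ is polynomial in the security parameter. Therefore $n$-\SMIN~leaks neither any individual plaintext $x_i$ nor the final minimum $\min(x_1,\ldots,x_n)$ to either $\mathcal{A}_{S1}$ or $\mathcal{A}_{S2}$.

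The main obstacle is ensuring that information does not accumulate across iterations through the running-minimum ciphertext. This is handled by observing that every intermediate value crosses from one iteration to the next strictly as a threshold Paillier ciphertext, and that the internal randomness (the masks $r_1, r_2$ and the bit $\pi$) used inside each 2-\SMIN~call is drawn independently, so the per-iteration simulator guarantees of Corollary \ref{cor:2-smin} compose cleanly. Once this independence is made explicit, the rest is routine bookkeeping of the hybrid sequence.
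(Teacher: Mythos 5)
Your proposal takes essentially the same route as the paper: the paper's proof is a one-line appeal to Corollary \ref{cor:2-smin}, noting that $n$-\SMIN\ is just repeated invocation of 2-\SMIN, and you have simply made the implicit sequential-composition step explicit via induction and a hybrid argument. Your version is more rigorous but not a different approach, and it is correct.
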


\begin{proof}
    Based on Corollary \ref{cor:2-smin}, it is easy to derive Corollary \ref{cor:n-smin}, since $n$-\SMIN~is simply calling 2-\SMIN~repeatedly.
\end{proof}

\begin{theorem}
    \label{the:safe}
    Given an encrypted feature vector $\llbracket p \rrbracket$ and an encrypted feature vector database $\llbracket \mathcal{D} \rrbracket$, $S_1$ and $S_2$ cannot obtain the raw data $p$ and $\mathcal{D}$.
\end{theorem}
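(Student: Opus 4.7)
The plan is to prove Theorem \ref{the:safe} by a hybrid-style argument that walks through every message exchanged during both the registration and recognition phases of Pura and establishes, step by step, that the view of each semi-honest non-colluding server is computationally indistinguishable from a simulation that does not depend on the plaintexts $p$ or any entry of $\mathcal{D}$. The foundations are already in place: the semantic security of the (2,2)-threshold Paillier cryptosystem guarantees that a ciphertext reveals nothing to a party that holds only a partial key $sk_i$, while Corollaries \ref{cor:batchsquare} and \ref{cor:n-smin} guarantee that the sub-protocols \BatchSquare{} and $n$-\SMIN{} do not leak their inputs under non-colluding semi-honest adversaries.

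First I would dispatch the registration phase. The organization sends $\llbracket \mathcal{D} \rrbracket_1$ to $S_1$ together with $sk_1$, and $\llbracket \mathcal{D} \rrbracket_2$ together with $sk_2$ to $S_2$. Since $sk_1+sk_2\equiv 0\!\!\mod 2\alpha$ and neither server possesses $sk$ alone, by semantic security each half-database is pseudorandom to its holder, so no entry $v_{i,j}$ can be extracted. Second, I would trace the recognition phase message by message. The user's uploaded ciphertexts $\llbracket p \rrbracket$ and $\llbracket R \rrbracket$ again reveal no information by semantic security. The local computation of $\llbracket Q \rrbracket$ via the additive homomorphism is a pure ciphertext manipulation and transmits nothing new. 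The invocation of \BatchSquare{} on each server's slice of $\llbracket Q \rrbracket$ produces $\llbracket E \rrbracket$ without exposing the plaintext differences, by Corollary \ref{cor:batchsquare}. Forming the row-sums into $\llbracket D_k \rrbracket$ is again local and leakage-free. The two successive calls to $n$-\SMIN{} then yield $\llbracket \gamma \rrbracket$ without disclosing any individual distance, by Corollary \ref{cor:n-smin}.

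The subtle and, in my view, most delicate piece is the concluding mask-and-recover step, which is where $S_2$ is actually asked to produce a plaintext. Here $S_1$ forms $\llbracket \gamma+R\rrbracket = \llbracket \gamma \rrbracket \cdot \llbracket R \rrbracket$ using the user-supplied $\llbracket R \rrbracket$, calls \texttt{PDec}, and forwards the result to $S_2$, who completes threshold decryption to obtain $\gamma+R$. Because $R \stackrel{\$}{\gets}\{0,1\}^\sigma$ is known only to the user, Theorem \ref{theo:x+r} applies verbatim: $\gamma+R$ is a CPA-secure one-time-pad encryption of $\gamma$, so $S_2$'s view of $\gamma+R$ is indistinguishable from a uniformly chosen value of the same length and carries no information about $\gamma$, the minimum distance, or the matched $ID$. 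Symmetrically, $S_1$ never sees any plaintext at all in this step, only a re-randomized ciphertext and its own partial decryption.

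Composing these hybrids by the standard sequential-composition argument for semi-honest two-party protocols, the entire view of $S_k$ (for $k\in\{1,2\}$) can be simulated from $S_k$'s input alone, which contains no plaintext $p_j$ or $v_{i,j}$. Hence neither server can recover $p$ or $\mathcal{D}$, proving the theorem. The main obstacle I anticipate is book-keeping: I must verify that every intermediate ciphertext leaving $S_1$ to be (partially) decrypted by $S_2$ is either freshly re-randomized or additively blinded, so that the one-time-pad reduction of Theorem \ref{theo:x+r} can be invoked at each such crossing without any residual correlation leaking across the hybrid boundaries.
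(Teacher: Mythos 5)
Your proposal is correct and follows essentially the same route as the paper's own proof: decompose Pura into its homomorphic additions/subtractions (covered by semantic security of the threshold Paillier cryptosystem), the \BatchSquare{} calls (Corollary \ref{cor:batchsquare}), the $n$-\SMIN{} calls (Corollary \ref{cor:n-smin}), and the final mask-and-recover step under the non-collusion assumption. Your treatment is somewhat more careful than the paper's --- in particular, explicitly invoking Theorem \ref{theo:x+r} to argue that $\gamma+R$ is a one-time-pad-style blinding of the result revealed to $S_2$, and explicitly covering the registration phase --- but these are refinements of the same argument rather than a different approach.
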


\begin{proof}
    \label{pro:safe}
    The proposed Pura consists of the following operations: compute the difference between an encrypted input feature vector and an encrypted feature vector database via homomorphic subtraction; calculate the square of every difference by calling \BatchSquare; compute the square of Euclidean distance through homomorphic addition; search for the minimum distance and compare it with a threshold by calling $n$-\SMIN.
    
    According to the threshold Paillier cryptosystem \cite{lysyanskaya2001adaptive}, the addition and subtraction operations over ciphertexts do not leak plaintext data. According to Corollary \ref{cor:batchsquare}, $S_1$ and $S_2$ can safely cooperate to compute the square of a pair of ciphertexts by calling \BatchSquare. Based on Corollary \ref{cor:n-smin}, $n$-\SMIN~can perform the minimum operation among a group of ciphertexts without leaking privacy. Lastly, the encrypted recognition result is masked by $S_1$ and the user recovers the real result. Based on the assumption that $S_1$ and $S_2$ do not collude with each other, neither $S_1$ nor $S_2$ is able to learn the plaintext of the recognition result, which can only be obtained by the user.
\end{proof}

In conclusion, our proposed Pura solution can securely perform the PPFR task without revealing the original data to $\mathcal{A}_{S2}$ and $\mathcal{A}_{S2}$. In other words, Pura protects users' facial information and avoids private data leakage.

Finally, it can be easily demonstrated that Pura satisfies the three security requirements of ISO/IEC 24745 \cite{iso2011iso}: irreversibility, unlinkability, and confidentiality. First, the Paillier homomorphic cryptosystem ensures irreversibility, as attackers cannot retrieve the original plaintext data from leaked encrypted data. Second, the unlinkability is guaranteed by using different public/private key pairs for distinct face recognition systems, preventing the association of samples from the same user across different systems. Third, according to Theorem \ref{the:safe}, our scheme meets the confidentiality requirement of ISO/IEC 24745, ensuring that neither external adversaries nor the internal servers can obtain users' facial data.

\section{Experiments}
\label{sec:experiments}

In this section, we evaluate the effectiveness and efficiency of our proposed Pura. 

\subsection{Experimental setting}

We adopt the popular Labeled Faces in the Wild (LFW) \cite{huang2008labeled} with 13,233 images of 5,749 persons and CASIA-WebFace (WebFace) \cite{yi2014learning} with 459,737 images of 10,575 persons as our experimental datasets. We transform each image into a 512-dimensional feature vector in advance using a pre-trained deep-learning model Facenet \cite{schroff2015facenet}. Each value of the feature vector is mapped into the range of $[0,1]$. Since the values of the feature vectors are real numbers, we transform the real numbers into integers by multiplying a constant number 10,000. Our scheme is implemented in C++, and the experiments are conducted on two servers, each of which is equipped with an AMD EPYC 7402 CPU and 128 GB of RAM.

Our proposed secure computing protocols are based on SOCI$^+$ \cite{zhao2023soci}. We also implement the protocols using SOCI \cite{zhao2022soci}, called Naive. We conduct the experiments setting $N$ as 1024. We compare Pura with two related solutions \cite{drozdowski2019application} and \cite{huang2023efficient} based on BFV \cite{brakerski2014leveled}. Note that the work \cite{drozdowski2019application} adopts a row-wise strategy and the scheme \cite{huang2023efficient} utilizes a column-wise strategy. We conduct the experiments at an 80-bit security level. We set $prime\_modulus=265,774,897$ and $hensel\_lifting=1$. The parameter $slots$ is to control the number of plaintext encrypted in a single ciphertext. We select $slots=520$ for the work \cite{drozdowski2019application}, and $slots=1,040$ for the scheme \cite{huang2023efficient}. Other parameters are shown in Table \ref{tab:slots}.

\begin{table}[ht]
\caption{Parameter settings of different $slots$.}
\label{tab:slots}
\centering
\begin{tabular}{ccc}
\toprule
$slots$ & $cyclotomic\_polynomial$ & $modulus\_chain\_bits$ \\ \midrule
520     & 4,847                    & 119                    \\
1040    & 11,135                   & 230                    \\ \bottomrule
\end{tabular}
\end{table}

\subsection{Effectiveness}

To evaluate the effectiveness of our solution, we compare it with a baseline face recognition algorithm, which is formulated in Section \ref{pre:fr}. We randomly choose 200 images of 10 people in LFW and 300 images of 10 people in WebFace. All images are transformed into 512-dimensional feature vectors in advance. For each vector, we exclude it from the feature vector database and select it as the probe feature vector to be identified. We compare the precision and recall between baseline and Pura, where the precision and recall are formulated by $Precision = \frac{TP}{TP + FP}$ and $Recall = \frac{TP}{TP + FN}$, where $TP$, $FP$ and $FN$ represent $true$ $positive$, $false$ $positive$ and $false$ $negative$, respectively. As the comparison result shown in Fig. \ref{fig:effectiveness}, it is obvious that Pura has the same precision and recall with baseline, which demonstrates the effectiveness of our proposed Pura.

\begin{figure}[ht]
    \centering
    \subfigure[LFW]{\includegraphics[width=.49\columnwidth]{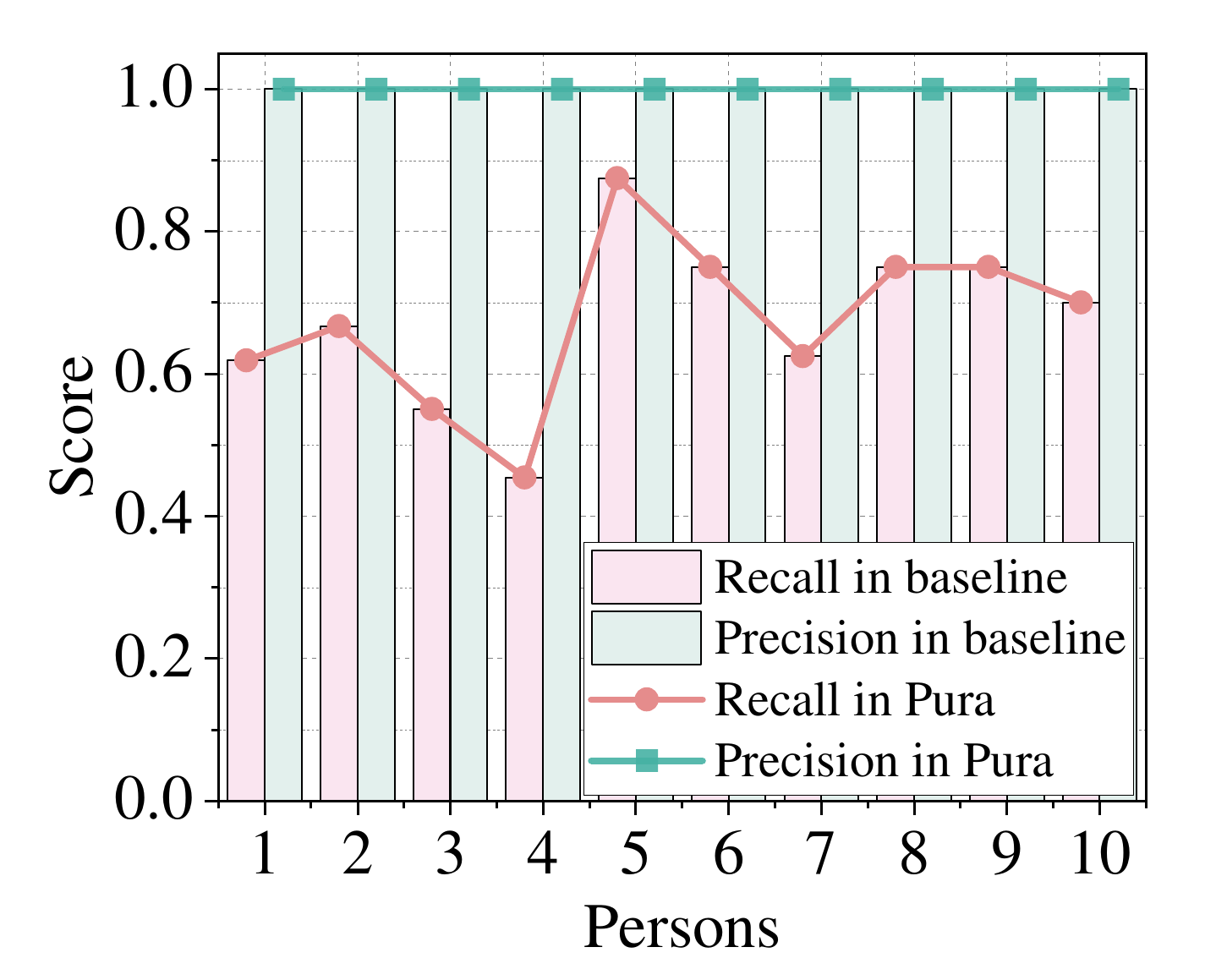}\label{fig:effectivenessLFW}}\hspace{1pt}
    \subfigure[WebFace]{\includegraphics[width=.49\columnwidth]{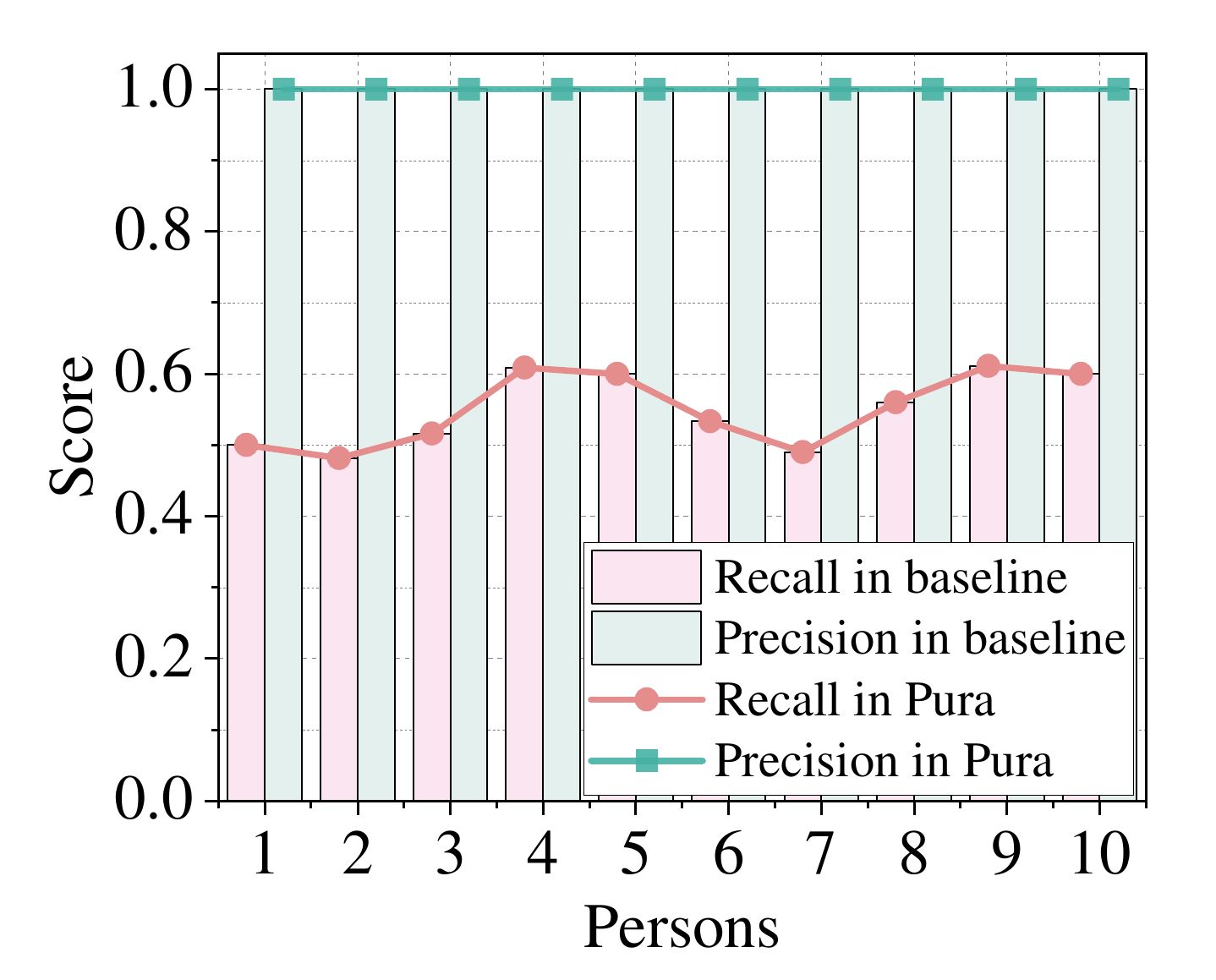}}\label{fig:effectivenessWebFace}
    \caption{Feasibility evaluation on LFW and WebFace.}
    \label{fig:effectiveness}
\end{figure}

We conduct a statistical analysis employing the Wilcoxon rank-sum test at a significance level of 0.05. When the $p$-value of the Wilcoxon rank-sum test is less than 0.05, it indicates that there are significant differences between the two solutions. Conversely, if the $p$-value exceeds 0.05, the test indicates no significant difference between the two solutions. As seen from Table \ref{tab:p-value}, the statistical $p$-value between Pura and baseline is greater than 0.05 in terms of the sum, mean, and variance. Therefore, we conclude that there is no significant difference between Pura and baseline when performing face recognition. Hence, we conclude that Pura is a feasible privacy-preserving face recognition solution.

\begin{table}[ht]
\caption{Statistical analysis between Pura and baseline.}
\label{tab:p-value}
\centering
\begin{tabular}{cccc}
\toprule
Dataset & Sum & Mean & Variance \\ \midrule
LFW     & 1   & 1    & 1        \\
WebFace & 1   & 1    & 1        \\ \bottomrule
\end{tabular}
\end{table}

\begin{figure}[ht]
    \centering
    \subfigure[Pura]{\includegraphics[width=.49\columnwidth,height=110pt]{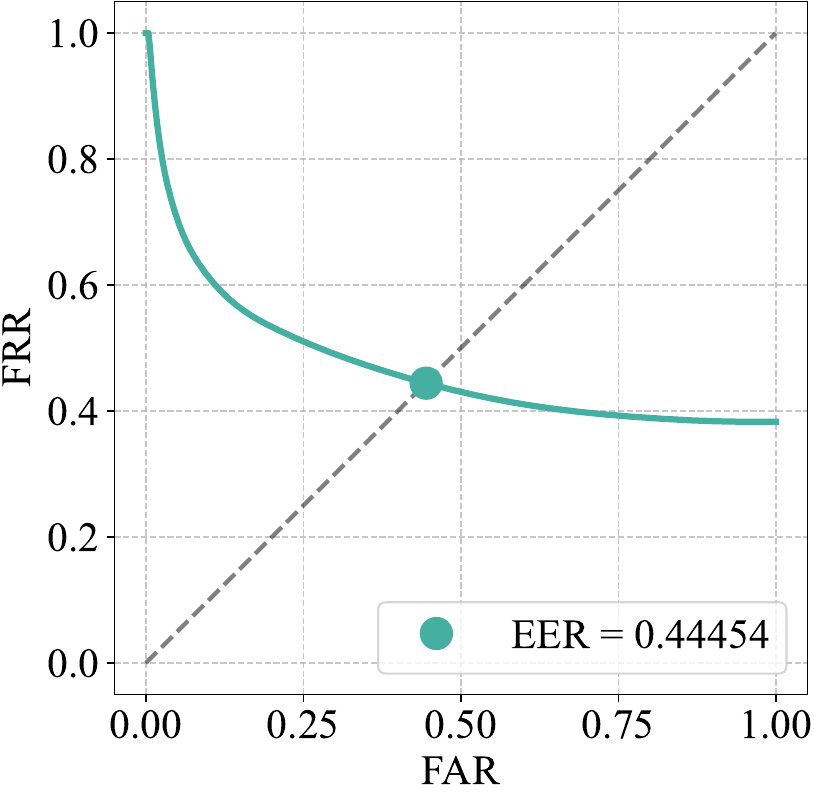}\label{fig:eer1}}\hspace{1pt}
    \subfigure[Basseline]{\includegraphics[width=.49\columnwidth,height=110pt]{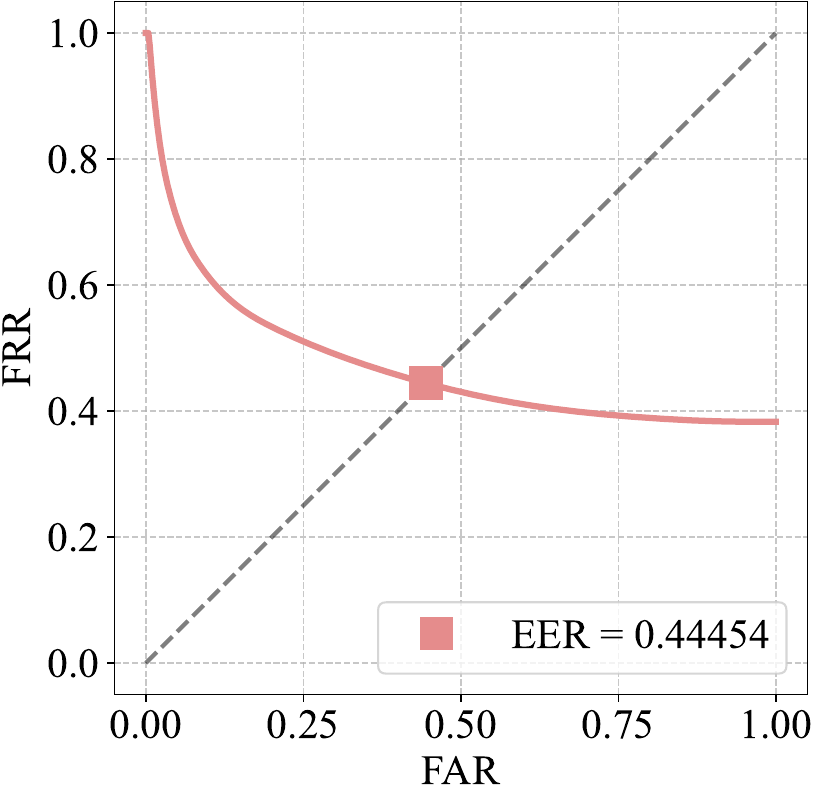}}\label{fig:eer2}
    \caption{DET curve comparison of Pura with baseline.}
    \label{fig:err}
\end{figure}

We further compare the Detection Error Tradeoff (DET) curves of Pura and baseline. The Equal Error Rate (EER) is the point where the False Acceptance Rate (FAR) equals the False Rejection Rate (FRR), presenting the performance of the face recognition system. As shown in Fig. \ref{fig:err}, the EER for Pura is 0.44454, which matches the EER for baseline. This result indicates that the proposed encryption scheme has no affect on the performance of the face recognition system, further demonstrating the effectiveness of Pura.

\subsection{Efficiency}

We first compare the encrypted feature vector database storage. We compare Pura with the work \cite{drozdowski2019application} and the scheme \cite{huang2023efficient}. The result is depicted in Fig. \ref{fig:db_storage}. It can be observed that BFV-based schemes suffer from high storage overhead. In terms of the storage overhead of a single server, the solution \cite{huang2023efficient} consumes more than 6 GB, while the solution \cite{drozdowski2019application} costs over 3 GB when the dataset size is 10,000. In contrast to the solutions \cite{drozdowski2019application} and \cite{huang2023efficient}, our proposed Pura consumes around 1 GB of storage overhead on a single server. Consequently, we can say that the proposed Pura enjoys less storage cost.

\begin{figure}[ht]
	\centering
	\includegraphics[scale=0.35]{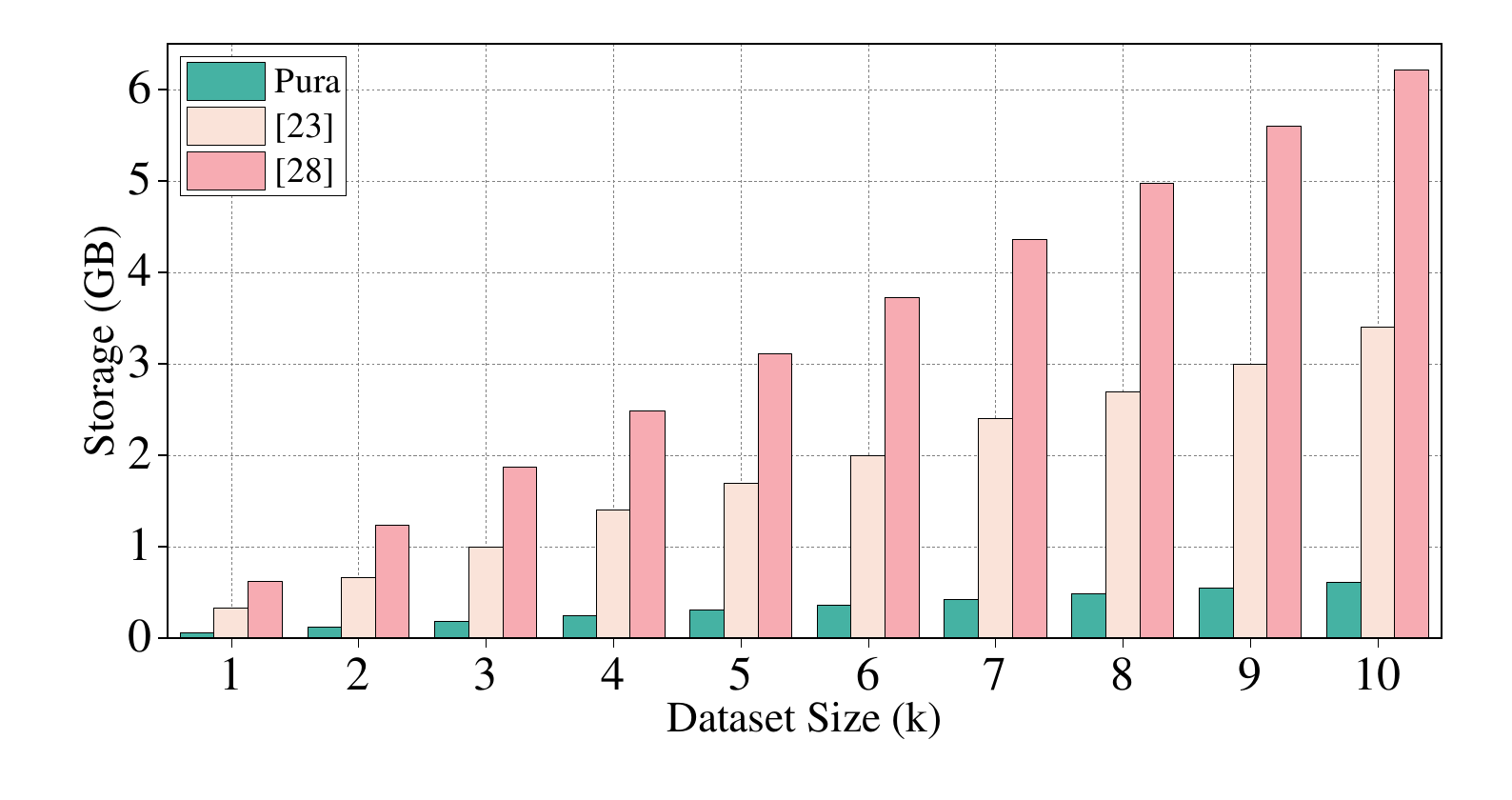}
	\caption{Comparison of storage costs of a single server among three solutions.}
	\label{fig:db_storage}
\end{figure}

Both solutions \cite{drozdowski2019application} and \cite{huang2023efficient} adopt BFV to perform the square operation. In order to demonstrate the efficiency of the proposed \BatchSquare~protocol, we compare the runtime of \BatchSquare~and the BFV-based scheme, as well as \BatchSMUL, in terms of square operations. We use the HElib library \cite{halevi2014algorithms} to implement the BFV square operation, named \texttt{BFV-Square}. We range the data size from 1,000 to 10,000 to compare the runtime. As shown in Table \ref{tab:comparison_square}, \BatchSquare~takes less time than \texttt{BFV-Square} generally. Note that in the square operation, \BatchSquare~can reduce the redundancies in \BatchSMUL, thereby decreasing the number of computed expressions and increasing the number of batch computations. Consequently, \BatchSquare~outperforms \BatchSMUL~in the square operation. Additionally, the growth rates of the runtime for \texttt{BFV-Square}, \BatchSMUL, and \BatchSquare~are 55.47, 41.15, and 25.98, respectively. Therefore, as the size of the dataset grows, \BatchSquare~shows significant advantage in terms of runtime.

\begin{table*}[ht]
\caption{Runtime of the square operation.}
\label{tab:comparison_square}
\centering
\begin{tabular}{ccccccccccc}
\toprule
Dataset Size (k)      & 1   & 2   & 3   & 4   & 5   & 6  & 7 & 8 & 9 & 10   \\ \midrule
\texttt{BFV-Square}                   & 57 ms  & 112 ms & 170 ms & 226 ms & 279 ms & 333 ms & 391 ms & 446 ms & 501 ms & 557 ms  \\
\BatchSMUL    & 104 ms & 156 ms & 191 ms & 238 ms & 274 ms & 321 ms & 354 ms & 395 ms & 439 ms & 481 ms  \\
\BatchSquare  & 60 ms  & 90 ms  & 125 ms & 153 ms & 173 ms & 202 ms & 224 ms & 240 ms & 272 ms & 304 ms  \\ \bottomrule
\end{tabular}
\end{table*}

Since the garbled circuit is one of the most popular approaches to finding the minimum value securely, we extensively compare our proposed $n$-\SMIN~scheme with the garbled circuit. We set the same security level (i.e., 80-bit security) in this experiment. Following the approach of constructing a garbled circuit to obtain the minimum value in the work \cite{huang2023efficient}, we separate each value $x$ into two shares $x_1$ and $x_2$, such that $x = x_1 + x_2$, and store $x_1$ and $x_2$ on different servers. In the $n$-\SMIN~scheme, each value is encrypted into ciphertext, and all ciphertexts are stored separately on different servers. As shown in Fig. \ref{fig:SMIN_GC}, $n$-\SMIN~requires less runtime and communication cost to find the minimum value than the garbled circuit. All in all, $n$-\SMIN~significantly outperforms the traditional garbled circuit.

\begin{figure}[ht]
    \centering
    \subfigure[Runtime]{\includegraphics[width=.48\columnwidth]{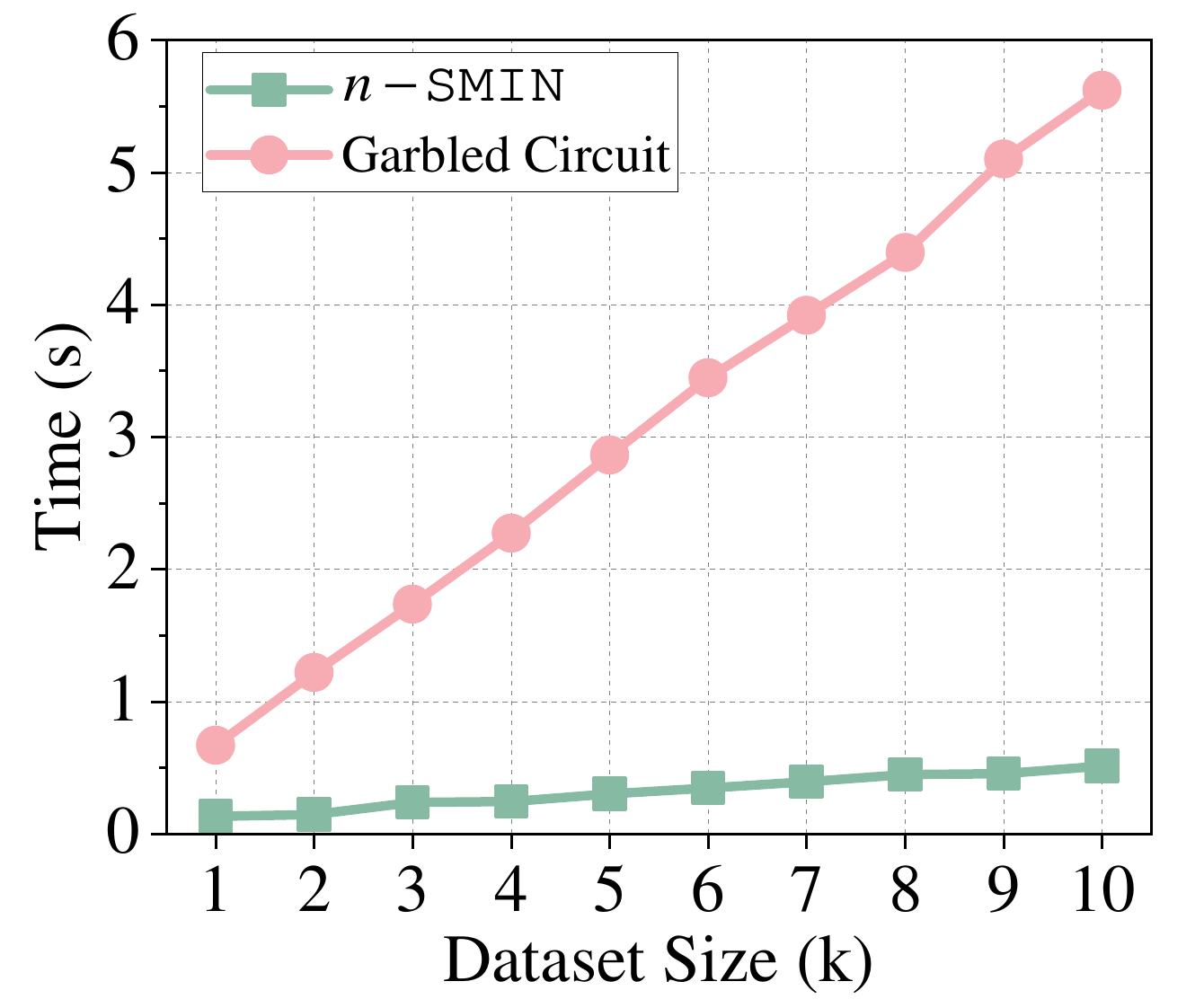}\label{fig:distance_time}}\hspace{5pt}
    \subfigure[Communication cost]{\includegraphics[width=.483\columnwidth]{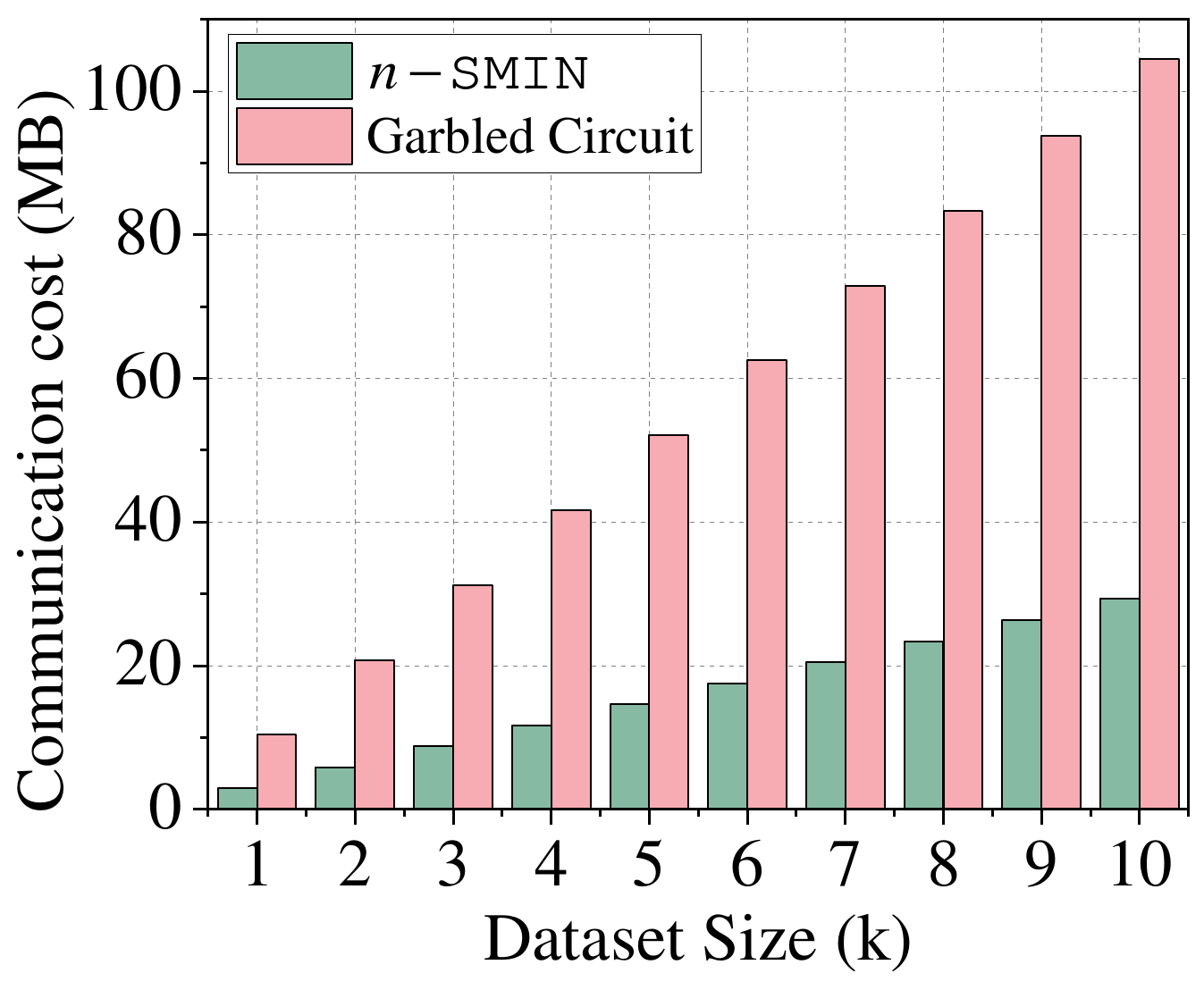}}\label{fig:distance_cost}
    \caption{Comparison between $n$-\SMIN~and garbled circuit.}
    \label{fig:SMIN_GC}
\end{figure}

\begin{figure}[ht]
    \centering
    \subfigure[Feature vector encryption]{\includegraphics[width=.478\columnwidth]
    {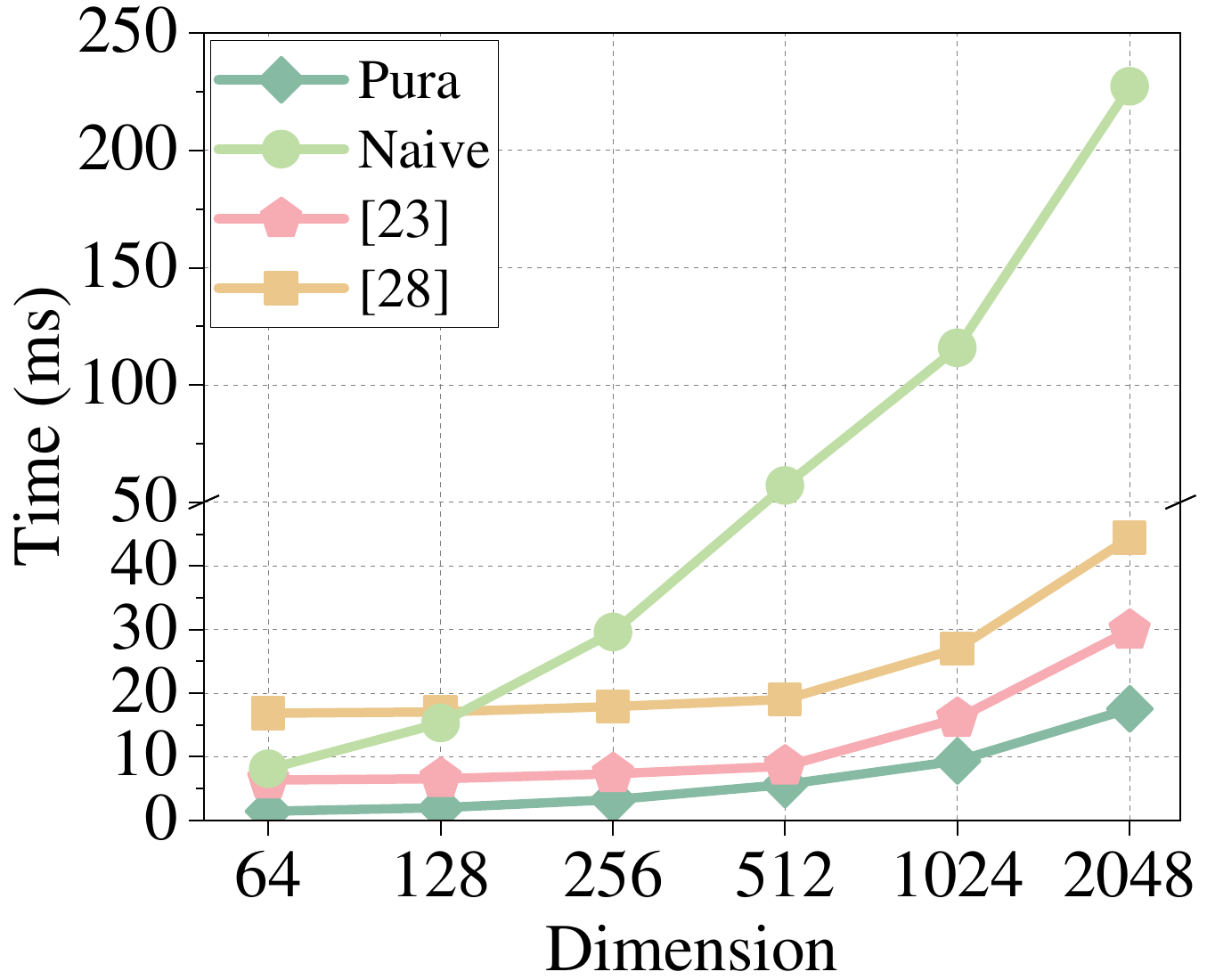}\label{sub_fig:feature_enc}}\hspace{5pt}
    \subfigure[Face recognition]{\includegraphics[width=.488\columnwidth]
    {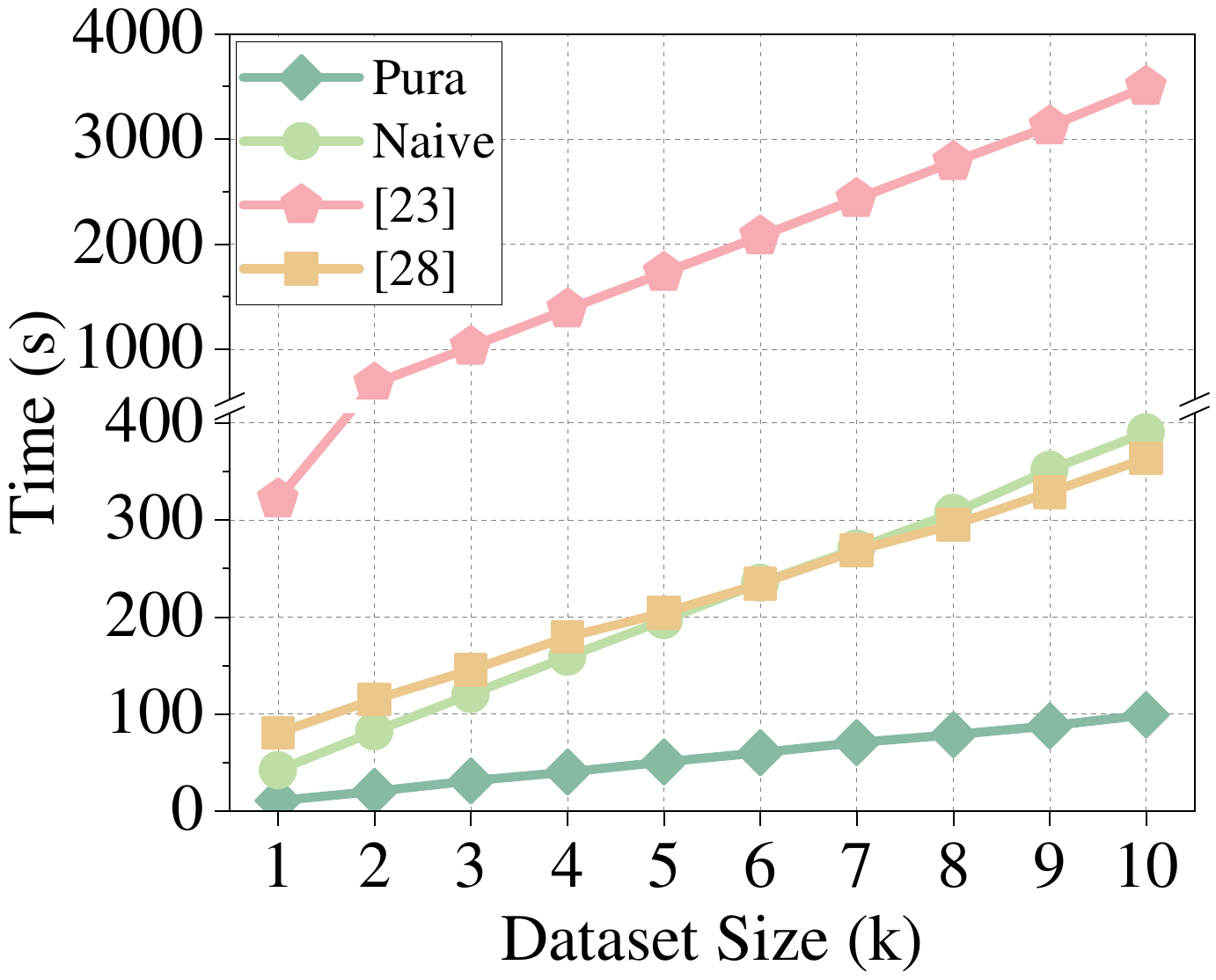}\label{sub_fig:recog}}
    \caption{Runtime comparison among four solutions.}
    \label{fig:recog}
\end{figure}

We evaluate the efficiency of computation for Pura through runtime. Fig. \ref{fig:recog} shows the runtime comparison between Pura, Naive, solutions \cite{drozdowski2019application} and \cite{huang2023efficient}. Fig. \ref{sub_fig:feature_enc} shows the runtime of encrypting a feature vector with varying dimensions. It can be observed that our proposed Pura outperforms the other schemes. Among the four schemes, Pura shows the slowest increase in runtime as the dimension of the feature vector changes. From Fig. \ref{sub_fig:recog}, we see that Pura is significantly faster than the other algorithms. In contrast to the work \cite{huang2023efficient}, Pura is 16 times faster than the work \cite{huang2023efficient} when the dataset size is 1,000. And the runtime of Pura is twice that of the work \cite{huang2023efficient} when the dataset size is 10,000. Since the work \cite{drozdowski2019application} utilizes a high-complexity strategy and excessive use of expensive homomorphic operations such as circular shifts, the recognition runtime is unacceptable. Thanks to the offline mechanism and the optimization of SOCI \cite{zhao2022soci}, Pura can save about 75\% of runtime compared to Naive.

Table \ref{tab:communication_total} shows the comparison results in terms of communication cost among the three solutions. As indicated in Table \ref{tab:communication_total}, the communication cost of Pura is less than that of the work \cite{huang2023efficient} in general. It can be observed that when the dataset size exceeds 9,000, the communication cost of Pura surpasses that of the work \cite{huang2023efficient}. We consider that this trade-off between communication cost and runtime in our scheme is acceptable, according to Fig. \ref{sub_fig:recog}. More crucially, the user experience is of greater importance for a face recognition system. Additionally, combining Table \ref{tab:communication_total} and Fig. \ref{sub_fig:recog}, although the communication cost of scheme \cite{drozdowski2019application} is the lowest, the sacrificed computational efficiency is unacceptable. The scheme requires a single server to perform all the complex and costly computations. While this reduces the communication volume, it results in an intolerable recognition latency. Additionally, as the scheme allows the server to decrypt and access the plaintext, its security level is significantly lower than that of our proposed Pura.

To summarize, our proposed Pura excels in runtime efficiency, significantly outperforming the other schemes. Also, Pura achieves a good balance between communication cost and runtime.

\begin{table*}[ht]
    \caption{Comparison of communication cost (GB) among three solutions.}
    \label{tab:communication_total}
    \centering
    \begin{tabular}{ccccccccccc}
    \toprule
    Dataset Size (k) & 1      & 2      & 3      & 4      & 5      & 6      & 7      & 8      & 9      & 10     \\   \midrule
    Pura             & 0.2061 & 0.4121 & 0.6172 & 0.8229 & 1.0295 & 1.2344 & 1.4404 & 1.6463 & 1.8516 & 2.0572 \\
    \cite{drozdowski2019application}         & 0.0012 & 0.0021 & 0.0030 & 0.0038 & 0.0047 & 0.0056 & 0.0064 & 0.0073 & 0.0084 & 0.0090 \\
    \cite{huang2023efficient}         & 1.6777 & 1.6872 & 1.6966 & 1.7062 & 1.7158 & 1.7251 & 1.7347 & 1.7442 & 1.7538 & 1.7631  \\ \bottomrule
    \end{tabular}
\end{table*}

\section{Conclusion}
\label{sec:conclusion}

In this paper, we propose Pura, an efficient privacy-preserving face recognition solution, which enables the twin servers to efficiently identify a user without leaking privacy. To achieve non-interactive and privacy-preserving face recognition, we propose a novel PPFR framework based on the threshold Paillier cryptosystem. We carefully design several secure underlying computing protocols to support secure and efficient operations over ciphertexts, including multiplication, square, and minimum. Moreover, we utilize a parallel computing mechanism to improve the efficiency of privacy-preserving face recognition. Our proposed Pura avoids degrading recognition accuracy while achieving remarkable performance.
The privacy analysis details the security of our design through the proposed underlying protocols. The experimental results demonstrate that our solution outperforms the state-of-the-art in terms of communication cost and runtime. For future work, we intend to deploy our proposed Pura into a real product.

\section{Acknowledgments}

The authors would like to thank the Editor-in-Chief, the Associate Editor, and the reviewers for their valuable comments and suggestions. This work is partially supported by the National Natural Science Foundation of China (No. 62202358), the China PostDoctoral Science Foundation (No. 2023TQ0258), the Guangdong Natural Science and Foundation (No. 2024A1515011039).

\bibliographystyle{IEEEtran}
\bibliography{ref}

\begin{IEEEbiography}[{\includegraphics[width=1in,height=1.25in,clip,keepaspectratio]{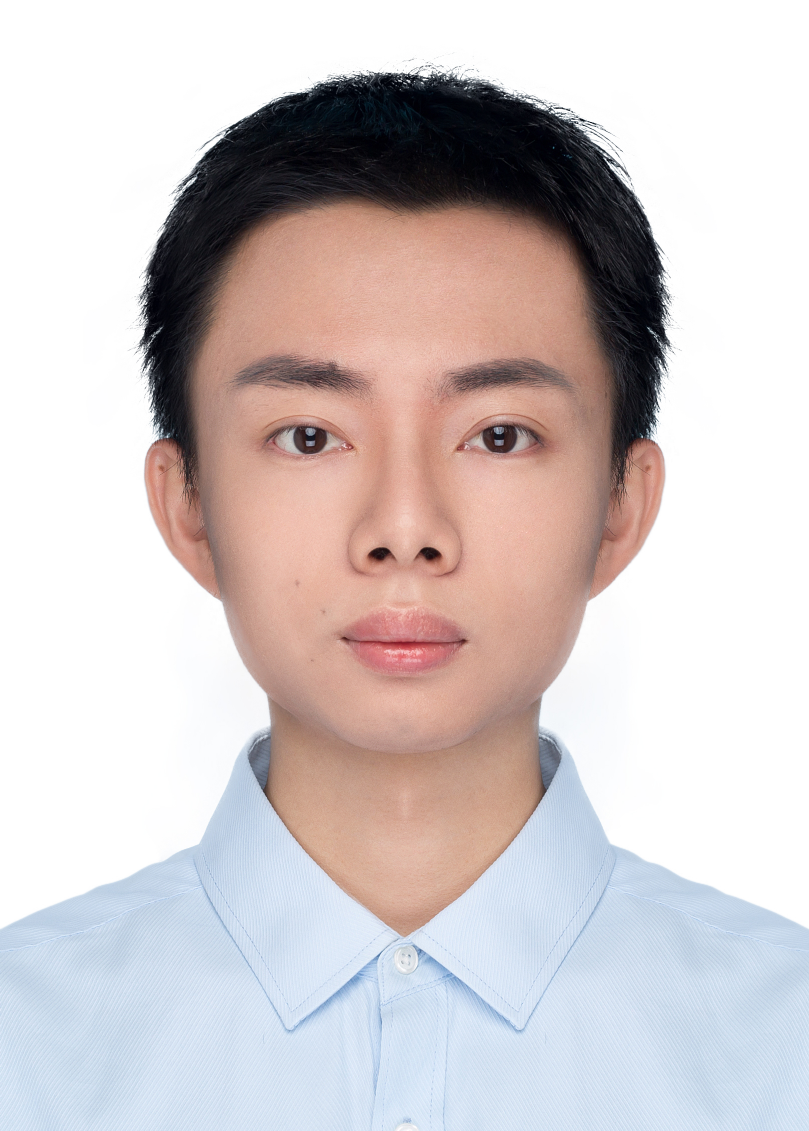}}]{Guotao Xu}
received the B.S. degree in Information Security from Guangdong University of Technology, Guangzhou, China, in 2023. He is currently working toward the M.S. degree in Guangzhou Institute of Technology, Xidian University, Guangzhou, China. His current research interest is privacy-preserving computation.
\end{IEEEbiography}

\begin{IEEEbiography}[{\includegraphics[width=1in,height=1.25in,clip,keepaspectratio]{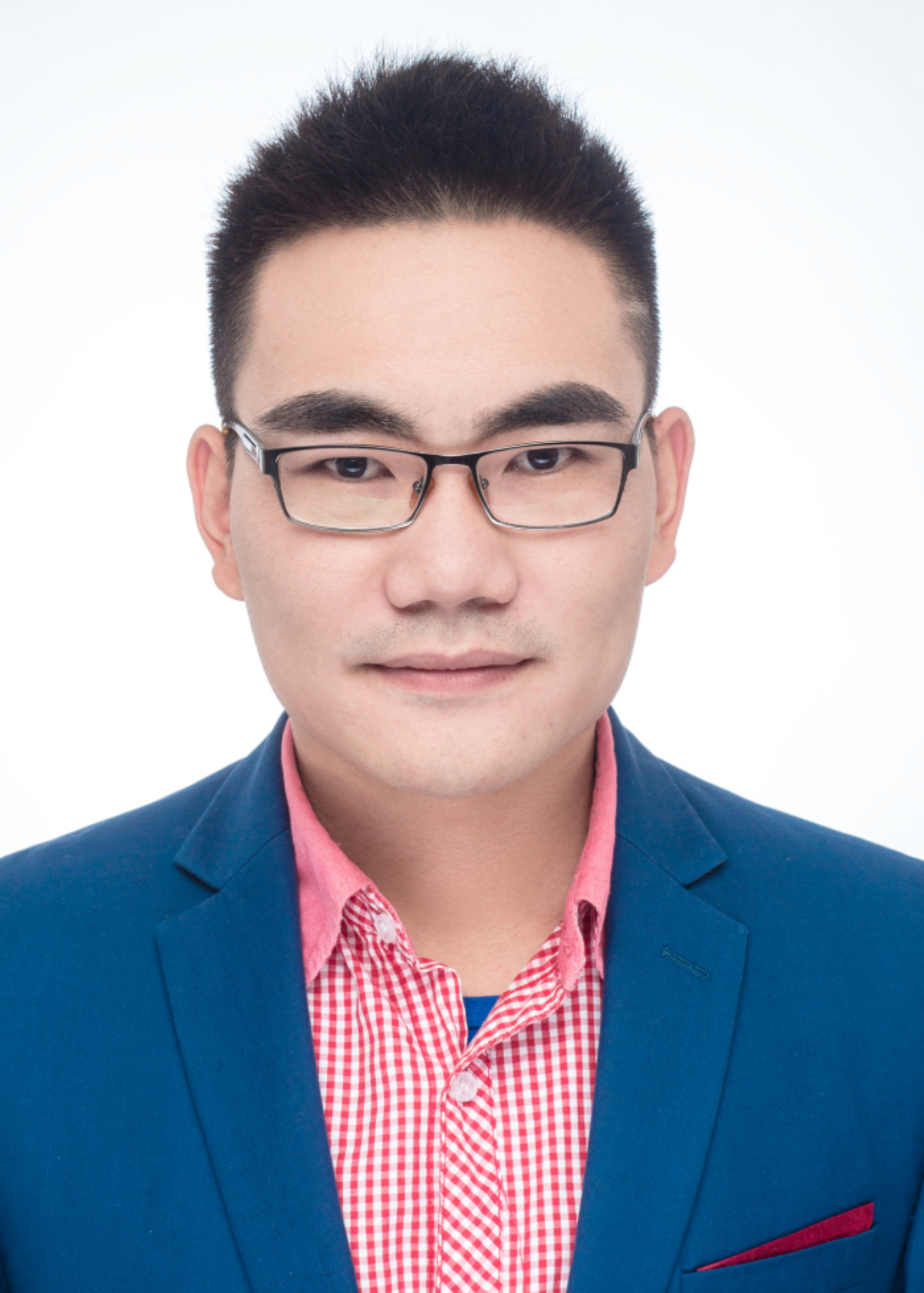}}]{Bowen Zhao}
(Member, IEEE) received the Ph.D. degree in cyberspace security from the South China University of Technology, China, in 2020. He was a Research Scientist with the School of Computing and Information Systems, Singapore Management University, from 2020 to 2021. He is currently an Associate Professor with the Guangzhou Institute of Technology, Xidian University, Guangzhou, China. His current research interests include privacy-preserving computation and learning and privacy-preserving crowdsensing.
\end{IEEEbiography}

\begin{IEEEbiography}[{\includegraphics[width=1in,height=1.25in,clip,keepaspectratio]{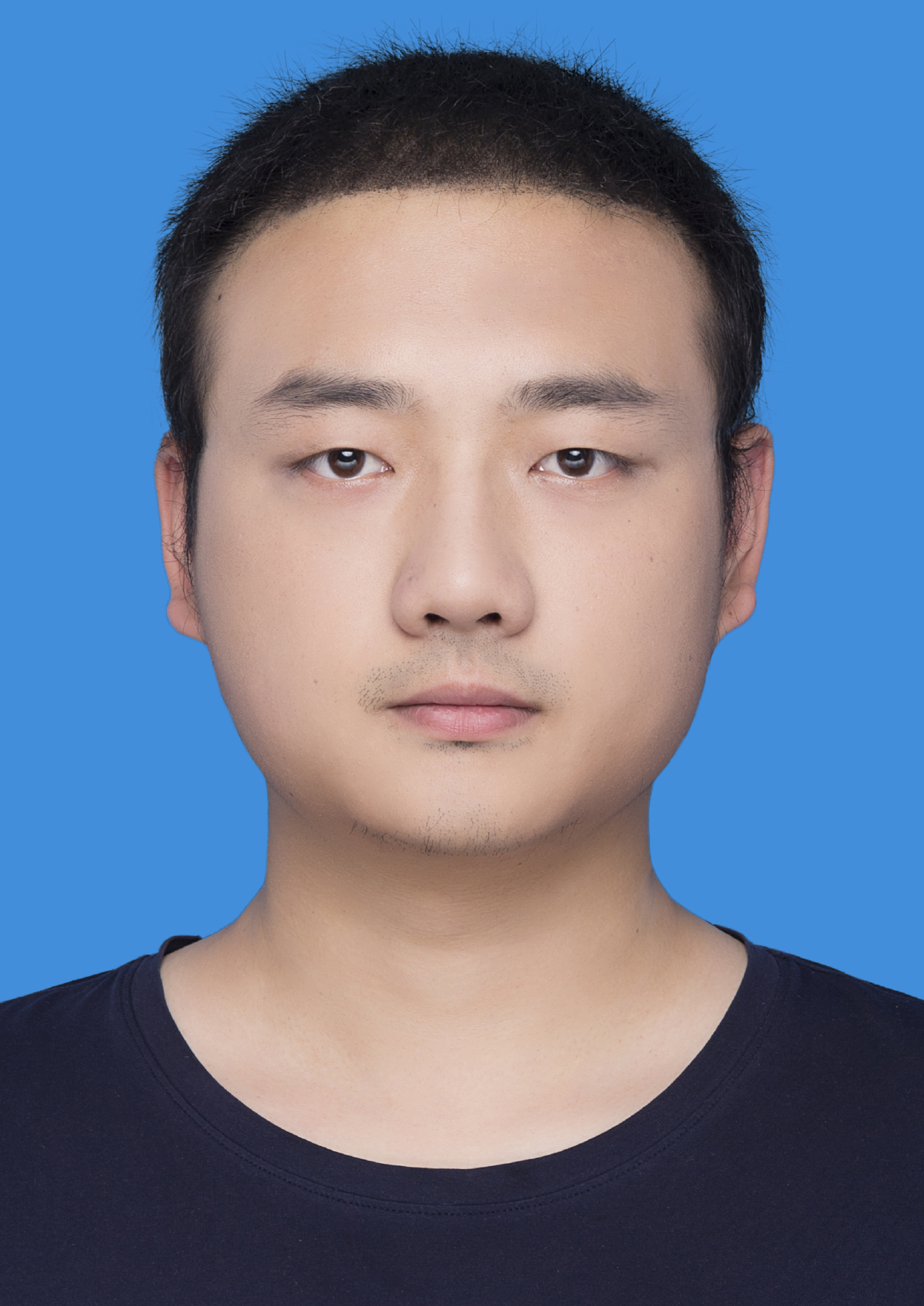}}]{Yang Xiao}
(Member, IEEE) received the B.S. and Ph.D. degrees in communication engineering from Xidian University, Xi’an, China, in 2013 and 2020, respectively. From 2017 to 2019, he was supported by the China Scholarship Council to be a visiting Ph.D. student with the University of New South Wales, Sydney, NSW, Australia. He is currently a Lecturer with the State Key Laboratory of Integrated Services Networks, School of Cyber Engineering, Xidian University. His research interests include social networks, joint recommendations, graph neural networks, trust evaluation, and blockchain.
\end{IEEEbiography}

\begin{IEEEbiography}
    {Yantao Zhong}\!\!\!, China Resources Intelligent Computing Technology (Guangdong) Co., Ltd., Senior Engineer, 1980-, from Shangrao, Jiangxi, focusing on cryptography and privacy computing technology.
\end{IEEEbiography}

\begin{IEEEbiography}
    {Liang Zhai}\!\!\!, Chinese Academy of Surveying and Mapping, Beijing, China.
\end{IEEEbiography}

\begin{IEEEbiography}[{\includegraphics[width=1in,height=1.25in,clip,keepaspectratio]{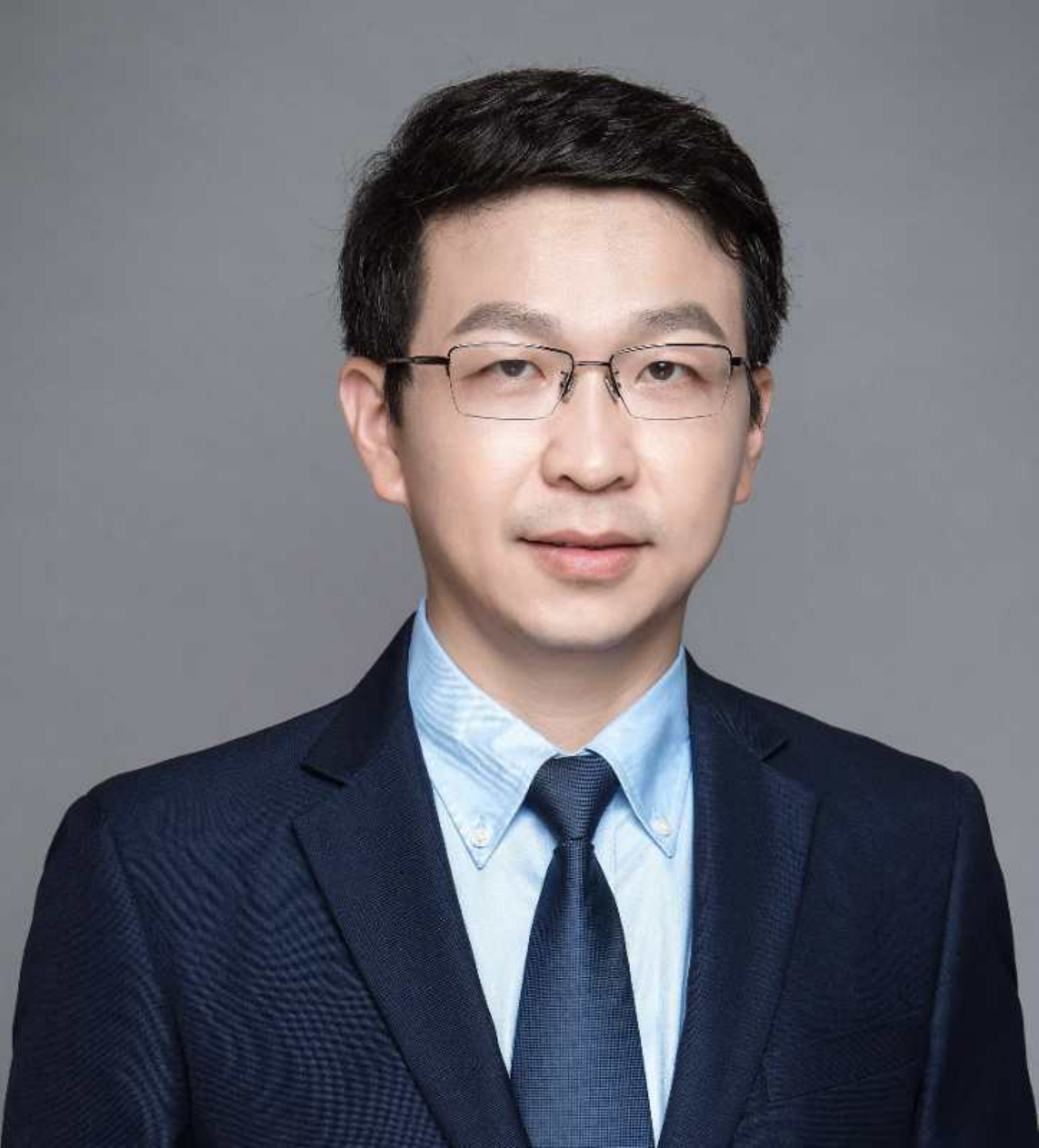}}]{Qingqi Pei}(Senior Member, IEEE) received the B.S., M.S., and Ph.D. degrees in computer science and cryptography from Xidian University, Xi'an, China, in 1998, 2005, and 2008, respectively. He is currently a Professor and a member of the State Key Laboratory of Integrated Services Networks, also a Professional Member of ACM, and a Senior Member of the Chinese Institute of Electronics and China Computer Federation. His research interests focus on digital contents protection and wireless networks and security.
\end{IEEEbiography}

\end{document}